\newcommand{\wt}{\widetilde}
\newcommand{\wtg}{{\mathrm{wtg}}}
\newcommand{\stg}{{\mathrm{stg}}}
\newcommand{\gts}{{\mathrm{gts}}}
\begin{document}


\title{From data to concepts via  wiring diagrams}

\author{Jason Lo}
\address{Department of Mathematics, California State University, Northridge, USA}
\email{jason.lo@csun.edu}

\author{Mohammadnima  Jafari}
\address{Department of Computer Science, California State University, Northridge, USA}
\email{mohammadnima.jafari@csun.edu}

\date{\today}

\maketitle

\begin{abstract}
A wiring diagram is a labeled directed graph that represents an abstract concept such as a temporal process.  In this article, we introduce the notion of a quasi-skeleton wiring diagram graph, and prove that quasi-skeleton wiring diagram graphs correspond to Hasse diagrams.  Using this result, we designed algorithms  that extract wiring diagrams from sequential data.     We used our algorithms  in analyzing the behavior of an autonomous agent playing a computer game, and the algorithms correctly identified the winning strategies.  We compared the performance of our main algorithm with two other algorithms based on standard clustering techniques (DBSCAN and agglomerative hierarchical), including when some of the data was perturbed.  Overall, this article brings together techniques in category theory, graph theory, clustering,  reinforcement learning, and data engineering.
\end{abstract}

\tableofcontents

\begin{multicols}{2}

\section{Introduction}

\paragraph[Ologs and wiring diagrams] The notion of an ontology log, or \emph{olog} for short, was defined by Spivak in 2012 as a method for knowledge representation \cite{SpivakKent}.  Since ologs are based on category theory in mathematics, it means that all the tools and techniques from category theory are at one's disposal when using ologs.  In addition, since ologs are authored using words (in any written language of the author's choice), they are easily understandable to humans.  By design, an olog also represents a database schema in a natural way, meaning they  provide a framework for organizing data in an autonomous system.  Even though ologs are very similar to knowledge graphs in appearance, they are different in a fundamental way: arrows in an olog are always \emph{composable}.  This means, in practice, that the arrows in ologs often correspond to functions.


In 2024, the first author built on  the notion of an olog and considered the notion of a \emph{wiring diagram}, defined to be a  directed graph whose labels come from an olog \cite{LoMR1}.  Wiring diagrams, in the sense as defined in \cite{LoMR1}, make it easier than ologs to represent complex concepts that may involve `before-and-after' (e.g.\ temporal) relations among their components.  For example, if $p$ denotes a person, $s$ a coffee shop, and $c$ a cup of coffee, then the concept of `a person buying a cup of coffee' (or simply `buying coffee') can be represented by the wiring diagram

\begin{center}
\includegraphics[scale=0.6]{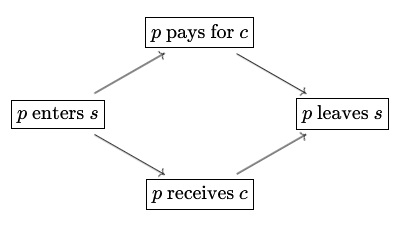}
\end{center}

This wiring diagram conveys the idea, that in order for the concept of `buying coffee' to occur, all these four events must occur:
\begin{itemize}
    \item A: $p$ enters a coffee shop;
    \item B: $p$ pays for the coffee;
    \item C: $p$ receives the coffee;
    \item D: $p$ leaves the coffee shop.
\end{itemize}
The arrows in the wiring diagram represent before-and-after relations, so the events $A, B, C, D$ are required to  satisfy these order relations:
\begin{itemize}
    \item $A$ must occur before $B$ as well as $C$;
    \item both $B$ and $C$ must occur before $D$.
\end{itemize}
Overall, the concept `buying coffee' is considered to have occurred if all the individual events $A, B, C, D$ have  occurred, and all the  order relations above are satisfied.

In practice, one can think of  a wiring diagram as a directed acyclic graph where the vertex labels  correspond to sensor readings   \cite{LoMR1}.  The sensor could be a simple, physical sensor such as a thermometer, in which case an associated label in a wiring diagram could be `the ambient temperature reaches 30 degrees Celsius'.  The sensor could also be a complex, non-physical sensor such as a software detecting anomalies in online user behaviors, in which case an associated label in a wiring diagram could be `customer $X$ makes a highly atypical financial transaction'.

Ologs themselves can already be used to perform basic deductive reasoning \cite{SpivakKent}.  Since wiring diagrams are extensions of ologs (in the sense that the label of each node in a wiring diagram must come from an olog), wiring diagrams can be used to represent more complicated types of reasoning, such as analogical reasoning and problem solving \cite{LoMR1}.

\paragraph[The problem] When it comes to using  wiring diagrams to perform reasoning within an autonomous system, two  fundamental problems arise:
\begin{itemize}
    \item Problem I: How does the autonomous system translate sensor data to wiring diagrams?
    \item Problem II: How does the autonomous system perform general reasoning by manipulating wiring diagrams?
\end{itemize}
Problem I is part of the broader  problem of, \emph{how does an autonomous system understand its environment by defining concepts on its own, based on data collected through its sensors?}  In our context, this means the following: suppose an autonomous system makes multiple observations of a human buying coffee from a coffee shop; how then does the autonomous system form the concept of `buying coffee' using the language of wiring diagrams?  In cognitive psychology, for example, Schank and Abelson suggested that children learn to grasp abstract concepts such as `eating at a restaurant' by experiencing it multiple times and then forming script-like structures associated to the concept \cite{schank1977scripts}.  In this paper, we attempt to realize this process by designing  algorithms that can be implemented in autonomous systems.


In designing such  algorithms, however, one faces the following  mathematical problems: What are the wiring diagrams  we are dealing with, and how can we find all of them?   That is, how do we characterize the wiring diagrams that are relevant to constructing our algorithms, and how do we enumerate all such wiring diagrams?  We answer  these questions in Theorem \ref{thm:ON-MR06p2-p15} by showing that the underlying graphs of the wiring diagrams that we work with - which we call \emph{quasi-skeleton wiring diagram graphs} - correspond to Hasse diagrams in graph theory, the enumeration problem of which is already well-known \cite{OEISposet}.

\paragraph[Our algorithms] Once we understood the space of wiring diagrams we are dealing with, we were able to design algorithms that extract wiring diagrams from data.  Our main algorithm, Algorithm \ref{algo:3-v1}, takes as input a collection of sequences representing multiple observations, and produces as output collections of matrices with entries in $\{0, 1\}$.  Each matrix corresponds to a wiring diagram, and each collection of matrices corresponds to a group of abstract concepts that are embedded within the input data.

Algorithm \ref{algo:3-v1} can be thought of as a type of clustering algorithm, since it attempts to aggregate data points into a smaller number of data points that are representative of the majority.  For us, each `data point' comes from  a single sequence of sensor data, which can be translated into a matrix associated to a wiring diagram via Algorithm \ref{algo:1-v4}.  The major difference between our  algorithm and standard clustering algorithms is, that while existing clustering algorithms require a measure of similarity (often a metric) between data points, our algorithm makes use of the poset structure among Hasse diagrams to ``generalize'' the concepts corresponding to individual data points to a smaller number of more abstract, but representative, concepts.  That is, our approach relies more on the \emph{intrinsic} structures of wiring diagrams rather than \emph{extrinsic} measures such as a metric, which does not necessarily ``see'' how some concepts are special cases, or generalizations of, other concepts.  Given this, we refer to  Algorithm \ref{algo:3-v1} as \emph{Hasse clustering}.

\paragraph[Implementation and testing]  To test our algorithms, we designed  two versions of a computer game where a player needs to collect certain items and use them in appropriate manners in order to win.  In version one, there is a unique winning strategy; in version two, there are two distinct winning strategies.  In each version, we used reinforcement learning (proximal policy optimization, or PPO) to train a computer agent to play the game, and   all the moves of the agent were recorded as it learned to play and eventually became capable of winning the game.   

In version one of the game,  Algorithm \ref{algo:3-v1}   produced the wiring diagram representing the unique winning strategy; in version two, the algorithm produced the two wiring diagrams representing the two possible winning strategies.  Our results demonstrate that autonomous systems can indeed be given the ability to extract abstract concepts from sensor data using the theory of wiring diagrams.



\paragraph[Outline of the paper]  In Section \ref{sec:WDgraphs}, we recall the definition of wiring diagrams from \cite{LoMR1} and introduce the notion of a quasi-skeleton wiring diagram graph.  In Section \ref{sec:WDvsHasse}, we prove our first main theorem (Theorem \ref{thm:ON-MR06p2-p15}), which says that quasi-skeleton wiring diagram graphs are precisely Hasse diagrams.  In Section \ref{sec:catQSWDG}, we set up the category-theoretic language necessary for comparing wiring diagrams and data in  Section  \ref{sec:consistWD}.  Section  \ref{sec:consistWD} culminates in  Theorem \ref{thm:ON-MR06p2-20-1}, which   can be seen as a universal property in the sense of category theory.  This theorem says that any comparison between  a wiring diagram and sequential data must go through  the `flattenings' of the wiring diagram (Definition \ref{def:flattening}).  

In Section \ref{sec:algos}, we lay out our algorithms for extracting wiring diagrams from data.  We then present our results from testing the algorithms in the context of player behavior in a computer game in Sections \ref{sec:application-1} and \ref{sec:application-2}.  In Section \ref{sec:comparison}, we include a brief comparison between our clustering algorithm (Hasse clustering) and two other approaches based on standard clustering algorithms  -  DBSCAN and (agglomerative) hierarchical clustering - in the context of our synthetic data.  In Section \ref{sec:comparison-corruption}, we compared the performance of Hasse clustering with DBSCAN- and hierarchical clustering-based approaches again, this time in a scenario where part of the data has been corrupted.  We conclude the article with  Section \ref{sec:article-conclusion}, which includes a brief discussion on future directions.



\section{Wiring diagram graphs} \label{sec:WDgraphs}



Recall that a \emph{directed graph} can be defined as a quadruple $G = (V, A, s, t)$ where $V$ is the set of vertices, $A$ is the set of arrows, and $s, t : A \to V$ are functions such that, for each arrow $a \in A$, $s(a)$ denotes the vertex where the arrow starts and $t(a)$ the vertex  where the arrow ends.

In \cite{LoMR1}, the first author defined wiring diagrams as a means to represent temporal processes.  Informally, a wiring diagram is a directed graph with labeled vertices and arrows, and where the arrows indicate before-and-after relations among the events represented by vertex labels.  These events are formally defined as  functions (called \emph{sensing functions}) taking on particular values; they are meant to represent sensors achieving particular readings.  Here, by a `sensor' we mean any entity - physical or non-physical - capable of a measurement (see \cite[Section 5]{LoMR1} for more details).

\begin{defn}[wiring diagram]\cite[Definition 5.4]{LoMR1}\label{def:WD}
    A \emph{wiring diagram} (\emph{WD}) is a quintuple 
    \[
    (V, A, s, t, \mathscr{L}_V)
    \]
    satisfying the following conditions.
    \begin{enumerate}
        \item[WD0.] $G=(V,A,s,t)$ is a finite directed graph, called the \emph{underlying graph} of the wiring diagram.  We will refer to elements of $V$ as \emph{vertices} or \emph{states}, and refer to elements of $A$ as \emph{arrows} or \emph{wires}. 
           
        \item[WD1.] $\mathscr{L}_V$ is an indexed set $\{ L_v\}_{v \in V}$ such that each $L_v$ is a  set of triples
        \[
          L_v = \{ (F_i, x_i, y_i) : 1 \leq i\leq m_v\}
        \]
        where $m_v$ is a nonnegative integer depending on $v$, and where each $F_i$ is a sensing function, with $x_i$ in the domain of $F_i$ and $y_i$ in the codomain of $F_i$. We allow $L_v$ to be the empty set.
        \item[WD2.] There is a labeling of the vertices, given by a function $f : V \to \{1, 2, \cdots, n\}$ where $n$ is the number of elements in $V$, such that for each $a \in A$, we have $f(s(a)) < f(t(a))$.
    \end{enumerate}
\end{defn}

Wiring diagrams extend the knowledge-representation framework of ologs laid out in \cite{SpivakKent}, and provide  a way to represent abstract concepts that have a temporal component.  Wiring diagrams also allow us to formally quantify the analogy between two abstract concepts \cite[Section 6]{LoMR1}.

The  underlying graph of a wiring diagram is called a wiring diagram graph.  Equivalently, we can define it as follows:

\begin{defn}[wiring diagram graph]
A \emph{wiring diagram graph} (or \emph{WD graph}) is a directed graph $G=(V,A,s,t)$ satisfying WD2.
\end{defn}

\begin{rem}
Note that condition WD2 is equivalent to having a linear extension ordering for a directed graph, and so a directed graph is a wiring diagram graph if and only if it is a directed acyclic graph (DAG) \cite[Remark 5.8]{LoMR1}.  In this article, however, we will continue to use the term `wiring diagram graph' to emphasize the perspective  that these are the underlying graphs of wiring diagrams. 
\end{rem}

In \cite{LoMR1}, the first author defined a \emph{skeleton WD graph} $G$ as a WD graph satisfying the following condition:
\begin{itemize}
    \item[WD3.] For any two vertices $v, v'$ in $G$, there is at most one path from $v$ to $v'$.
\end{itemize}

\begin{defn}[quasi-skeleton WD graph]\label{def:qsWDg}
We say a wiring diagram graph $G$ is \emph{quasi-skeleton} if it satisfies:
\begin{itemize}
    \item[(a)] For any two distinct vertices $v, v'$ in $G$, there is at most one arrow from $v$ to $v'$.
    \item[(b)] For any two distinct vertices $v, v'$ in $G$, if there is a path of length at least 2 from $v$ to $v'$, then there is no arrow  from $v$ to $v'$.
\end{itemize}
We say a wiring diagram is quasi-skeleton if its underlying graph is quasi-skeleton.
\end{defn}
It is clear that every skeleton WD graph is a quasi-skeleton WD graph.  

In a wiring diagram, an arrow from $A$ to $B$ indicates that the event at vertex $A$ should occur before the event at vertex $B$.  As a result, we only study quasi-skeleton WD graphs in this article. 


\begin{eg}
Consider the WD graph with $A, B, C, D$ as vertices
\[
\xymatrix @R=-0.1pc {
& B \ar[dr] & \\
A \ar[ur] \ar[dr] & & D \\
& C \ar[ur] & 
}
\]
This WD graph is not skeleton since there are two distinct paths from $A$ to $D$.  On the other hand, it is quasi-skeleton since, despite having two paths of length at least 2 from $A$ to $D$, there are no arrows from $A$ to $D$.
\end{eg}

\section{Wiring diagram graphs vs Hasse diagrams}\label{sec:WDvsHasse}


\paragraph[Transitive closure] \label{para:trans-clos} Following the ideas in \cite[Section 2]{aho1972transitive}, for every directed graph $G$ satisfying the condition
\begin{itemize}
    \item for any two vertices $v, v'$ (not necessarily distinct) in the graph, there is at most one arrow from $v$ to $v'$
\end{itemize}
(e.g.\ a quasi-skeleton WD graph),   if we write $V$ to denote the set of vertices of $G$, then we can think of $G$ as a subset of $V \times V$.  In this perspective, an element $(v,v')$ of $G$ represents an arrow from $v$ to $v'$.  Then we say $G$ is \emph{transitive} if, for every pair of vertices $v, v'$ (not necessarily distinct), there is a path from $v$ to $v'$  if and only if $(v,v') \in G$.
We define the \emph{transitive closure} $G^T$ of $G$ to be the smallest subset of $V \times V$ that contains $G$ and is transitive. 

\paragraph[Transitive reduction] \label{para:trans-red} 

\begin{defn}[transitive reduction, version 1]\cite[Section 1]{aho1972transitive}\label{def:TR-v1}
Given a directed graph $G$, the \emph{transitive reduction} $G^t$ of $G$  is a graph with the same vertex set as $G$ satisfying the following two conditions:
\begin{itemize}
    \item[(i)] There is a path from a vertex $v$ to a vertex $v'$ in $G^t$ if and only if there is a path from $v$ to $v'$ in $G$.
    \item[(ii)] There is no graph with fewer arrows than $G^t$ satisfying condition (i).
\end{itemize}
\end{defn}

For a finite acyclic directed graph $G$, the transitive reduction $G^t$ can be realized as a subgraph of $G$ by removing redundant arrows in any order and is unique \cite[Section 2]{aho1972transitive}, and coincides with the notion of a `minimum equivalent' of $G$ as defined in \cite{moyles1969algorithm} (see also \cite[Section 1]{aho1972transitive}).

Another definition of transitive reduction can be found in \cite[Section 2.2]{CHD}.  To state this definition, recall that in a poset $(P,\leq)$, we say an element $v$ covers $u$ if $u < v$ and there is no element $x$ such that $u<x<v$.  

In this article, we say a directed graph $G$ \emph{corresponds to a poset} if there is at most one arrow from any vertex $v$ to any (not necessarily distinct) vertex $v'$ in $G$ and, if we let $V$ denote the set of vertices of $G$, then the subset of $V \times V$
\[
\{ (a,b) \in V \times V : G \text{ has an arrow from $a$ to $b$} \}
\]
is a poset.   

\begin{defn}[transitive reduction, version 2]\cite[Section 2.2]{CHD}\label{def:TR-v2}
Suppose $G$ is a directed graph corresponding to a poset.  The \emph{transitive reduction} $G'$ of $G$ is a directed acyclic graph $G'$ with the same vertex set as $G$ such that there is an edge from $u$ to $v$ in $G'$ if and only if $v$ covers $u$ in $G$.
\end{defn}

The equivalence between Definition \ref{def:TR-v1} and Definition \ref{def:TR-v2} appear to be well known.  Nonetheless, we include here a proof for the case of graphs corresponding to posets  for ease of reference.  Recall that a \emph{loop} in a directed graph is an arrow from a vertex to itself.

\begin{lem}\label{lem:ON-MR06p2-16-1}
Let $G$ be a directed graph corresponding to a poset, and let $G^\circ$ denote the graph obtained from $G$ by removing the unique loop at every vertex.  Then the transitive reduction $(G^\circ)^t$ of $G^\circ$ in the sense of Definition \ref{def:TR-v1} coincides with the transitive reduction $G'$ of $G$ in the sense of Definition \ref{def:TR-v2}.
\end{lem}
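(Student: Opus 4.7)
The plan is to show that the graph $G'$ of Definition \ref{def:TR-v2} satisfies both defining conditions of $(G^\circ)^t$ in the sense of Definition \ref{def:TR-v1}, and then appeal to uniqueness of the transitive reduction for finite directed acyclic graphs.

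First I would set up the ambient objects. Because $G$ corresponds to a poset, antisymmetry rules out $2$-cycles on distinct vertices in $G^\circ$, and combined with transitivity it rules out longer cycles as well, so $G^\circ$ is a finite directed acyclic graph. The Aho--Garey--Ullman result quoted after Definition \ref{def:TR-v1} then applies: $(G^\circ)^t$ exists, is unique, and is a subgraph of $G^\circ$ characterized by conditions (i) and (ii).

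Next I would check that $G'$ is a subgraph of $G^\circ$ satisfying (i). Every cover edge $u \to v$ of $G'$ satisfies $u < v$ in the poset, hence corresponds to an arrow in $G$ and, being non-loop, also to an arrow in $G^\circ$. Paths in $G'$ are automatically paths in $G^\circ$. Conversely, if there is a path from $u$ to $v$ in $G^\circ$ with $u \neq v$, then $u < v$ in the poset, and a standard finiteness argument produces a maximal chain $u = x_0 < x_1 < \cdots < x_k = v$ in which each $x_{i+1}$ covers $x_i$; starting from the trivial chain $u < v$ and repeatedly inserting witnesses to non-covering, this process terminates because $V$ is finite. That maximal chain is a path from $u$ to $v$ in $G'$, establishing (i).

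For condition (ii), I would argue that no edge of $G'$ can be deleted without destroying reachability. If $u \to v$ is a cover edge of $G'$, then any alternative path from $u$ to $v$ in $G^\circ$ of length at least $2$ must traverse an intermediate vertex $x$ with $u < x < v$ in the poset, contradicting that $v$ covers $u$; hence $u \to v$ is the only $u$-to-$v$ path in $G'$, and its removal would make $v$ unreachable from $u$ in $G'$. So any proper subgraph of $G'$ already fails (i), a fortiori fails (ii), and $G'$ is minimal. Uniqueness of the transitive reduction for the finite DAG $G^\circ$ then forces $G' = (G^\circ)^t$.

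The only delicate step is the chain-to-covers argument in condition (i); everything else is bookkeeping translating between the graph-theoretic and order-theoretic viewpoints. I do not anticipate a substantive obstacle, since finiteness of $V$ makes the inductive refinement of a chain into a cover chain terminate after finitely many steps.
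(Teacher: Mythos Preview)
Your proof is correct and takes a genuinely different route from the paper's. The paper argues directly at the level of edge sets: it shows that an arrow $u\to v$ lies in $(G^\circ)^t$ if and only if $v$ covers $u$ in $G$, by two separate contradiction arguments (one using minimality of $(G^\circ)^t$, the other using a shortest-path argument). You instead verify that $G'$ itself satisfies the two defining conditions of Definition~\ref{def:TR-v1} and then invoke the Aho--Garey--Ullman uniqueness for finite DAGs. Your approach is slightly more conceptual and makes the role of the cover relation transparent via the maximal-chain refinement; the paper's approach is more self-contained in that it never needs to produce a full cover chain, only a single intermediate witness. One small wording issue: your sentence ``any proper subgraph of $G'$ already fails (i), a fortiori fails (ii)'' does not literally establish condition (ii), which concerns \emph{all} graphs with fewer arrows, not just subgraphs of $G'$. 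What actually closes the argument is the cited fact that for a finite DAG the transitive reduction is obtained by removing redundant arrows \emph{in any order}---so any subgraph of $G^\circ$ satisfying (i) with no removable edge must equal $(G^\circ)^t$. That is presumably what you mean by your final appeal to uniqueness, but it would be cleaner to say so explicitly.
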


\begin{proof}
Let $G$ be a graph corresponding to a poset.  Note that neither $(G^\circ)^t$ nor $G'$ has any loop.  So it suffices to show that for any two distinct vertices $u, v$ in $G$, there is an arrow from $u$ to $v$ in $(G^\circ)^t$ if and only if there is an arrow from $u$ to $v$ in $G'$.

Suppose there is an arrow  $\alpha : u \to v$ in $(G^\circ)^t$.  By \cite[Theorem 1]{aho1972transitive}, $(G^\circ)^t$ can be obtained from $G^\circ$, hence $G$ itself, by deleting arrows and loops. If $v$ does not cover $u$ in $G$, then there exists a vertex $w$, distinct from $u$ and $v$, such that there are arrows $u \to w \to v$ in $G$.  This means that there must be a path $p$ from $u$ to $w$, and a path $p'$ from $w$ to $v$ in $(G^\circ)^t$; the concatenation $p.p'$ then gives a path from $u$ to $v$ in $(G^\circ)^t$, meaning $\alpha$ could be removed from $(G^\circ)^t$ while preserving condition (i) in Definition \ref{def:TR-v1}, thus contradicting the minimality of $(G^\circ)^t$ required by condition (ii).  Hence $v$ covers $u$ in $G$, and there is  an arrow $u \to v$ in $G'$.

Conversely, suppose $\alpha : u \to v$ is an arrow in $G'$, i.e.\ $v$ covers $u$ in $G$.  This means there is a path of length $1$ from $u$ to $v$ in $G$.  As a result, there must be a path from $u$ to $v$ in $(G^\circ)^t$; let $q$ be such a path of minimal length, say 
\[
u=x_0 \to x_1 \to \cdots \to x_{n-1} \to x_n = v
\]
where $n \geq 1$.  If $n \geq 2$, then since $q$ is also a path in $G$, it follows that $v$ does not cover $u$, a contradiction.  Hence $n=1$ and there is an arrow from $u$ to $v$ in $(G^\circ)^t$.
\end{proof}


\begin{lem}\label{lem:quaskegr-tranred}
Every quasi-skeleton WD graph $G$ is the transitive reduction of $G^T$ in the sense of Definition \ref{def:TR-v1}.
\end{lem}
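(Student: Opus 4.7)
The plan is to verify directly that $G$ matches the transitive reduction of $G^T$ by unpacking Definition \ref{def:TR-v1} and applying the quasi-skeleton property (b).

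First, condition (i) of Definition \ref{def:TR-v1} is immediate: $G \subseteq G^T$ by construction, and every arrow of $G^T$ already corresponds, by the very definition of the transitive closure, to a path of $G$ between its endpoints, so $G$ and $G^T$ induce identical reachability relations.

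For condition (ii), I would prove the following slightly stronger claim: any subgraph $H$ of $G^T$ with the same reachability as $G^T$ must contain every arrow of $G$. Combined with \cite[Theorem 1]{aho1972transitive}, which asserts that for a finite DAG the transitive reduction is the unique minimum subgraph realising the prescribed reachability, this forces $G = (G^T)^t$. To prove the claim, let $u \to v$ be an arrow of $G$. Since $u$ reaches $v$ in $G^T$ and therefore in $H$, there is some path $u = x_0 \to x_1 \to \cdots \to x_k = v$ in $H$. If $k \geq 2$, each arrow of this path lies in $G^T$ and so represents a path of $G$ between its endpoints; concatenating through the interior vertex $x_1$ produces a path from $u$ to $v$ in $G$ of length at least $2$. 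By condition (b) of Definition \ref{def:qsWDg} this precludes an arrow from $u$ to $v$ in $G$, contradicting our choice. Hence $k=1$, so the arrow $u \to v$ belongs to $H$ itself.

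The main obstacle, as I see it, is conceptual rather than computational: one must recognise that condition (b), phrased merely as the absence of a direct arrow whenever a longer path exists, is exactly the ingredient needed to prevent any reachability-preserving subgraph of $G^T$ from shortcutting an arrow of $G$ through a longer detour. The translation step between paths in $G$ and paths in $G^T$ --- single arrows of $G^T$ can hide arbitrarily long paths of $G$ --- is the only place where one has to be careful; once condition (b) is applied at the right moment, the argument collapses to a short contradiction.
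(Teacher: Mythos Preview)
Your proposal is correct and follows essentially the same approach as the paper: both verify condition (i) by noting $G\subseteq G^T$ share reachability, and both establish minimality by showing that removing (or omitting) any arrow $u\to v$ of $G$ would force a path of length $\geq 2$ from $u$ to $v$ in $G$, contradicting condition (b) of Definition~\ref{def:qsWDg}. The only cosmetic difference is that the paper argues by deleting a single arrow from $G$ and invoking condition (a) to rule out a parallel length-$1$ replacement, whereas you argue over arbitrary reachability-preserving subgraphs of $G^T$ and use acyclicity to guarantee the detour has an interior vertex.
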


\begin{proof}
Let $G$ be any quasi-skeleton WD graph.  By the construction of $G^T$, we have that $G$ is a subgraph of $G^T$ with the same reachability as $G^T$.  Therefore, to prove that $G$ is the transitive reduction of $G^T$ in the sense of Definition \ref{def:TR-v1}, it suffices to show that removing any arrow from $G$ would cause its reachability to become different from that of $G^T$.

Let $\alpha$ be any arrow in $G$, say from vertex $u$ to vertex $v$, and let $G^-$ denote the graph obtained from $G$ by deleting $\alpha$.  For the sake of contradiction, suppose $G^-$ has the same reachability as $G$.  Then there must be a path $p$ from $u$ to $v$ in $G^-$, and this path is different from the path from $u$ to $v$ given by $\alpha$ itself in the graph $G$.  By condition (a) in Definition \ref{def:qsWDg}, $p$ must be a path of length at least 2; but then the existence of $\alpha$ and $p$ in $G$ contradicts condition (b) in Definition \ref{def:qsWDg}.  Hence $G$ is the transitive reduction of $G^T$.
%
%
\end{proof}

\begin{thm}\label{thm:ON-MR06p2-p15}
Let $V$ be a finite set.  Then a graph $G$ is a quasi-skeleton WD graph with vertex set $V$ if and only if it is the transitive reduction of a poset on $V$.
\end{thm}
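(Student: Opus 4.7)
The plan is to prove both directions separately, using Lemmas \ref{lem:ON-MR06p2-16-1} and \ref{lem:quaskegr-tranred} as the main tools. The bridging object in both directions is a poset $P$ on $V$ whose underlying relation (minus the diagonal) coincides with the transitive closure $G^T$.

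For the forward direction, suppose $G$ is a quasi-skeleton WD graph with vertex set $V$. I would define the relation $P \subseteq V \times V$ by $(u,v) \in P$ if and only if $u=v$ or there is a (nontrivial) path from $u$ to $v$ in $G$, and first verify that $P$ is a poset. Reflexivity is built in, transitivity follows from concatenating paths, and antisymmetry uses the fact that a WD graph is a DAG (the remark after Definition \ref{def:qsWDg}), so that paths $u \to v$ and $v \to u$ with $u \neq v$ would produce a cycle. Letting $P^\circ$ denote $P$ with the diagonal removed, one observes $P^\circ = G^T$. By Lemma \ref{lem:quaskegr-tranred}, $G = (P^\circ)^t$ in the sense of Definition \ref{def:TR-v1}, and by Lemma \ref{lem:ON-MR06p2-16-1}, $(P^\circ)^t$ equals the transitive reduction of $P$ in the sense of Definition \ref{def:TR-v2}. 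Hence $G$ is the transitive reduction of the poset $P$.

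For the reverse direction, let $P$ be a poset on $V$ and let $G$ be its transitive reduction in the sense of Definition \ref{def:TR-v2}. I would first check that $G$ is a WD graph: every arrow $u \to v$ in $G$ witnesses $u < v$ in $P$, so antisymmetry of $P$ rules out cycles in $G$; hence $G$ is a DAG and satisfies WD2. For quasi-skeleton property (a), since $v$ either covers $u$ or not, there is at most one arrow from $u$ to $v$ (and no loops, since covering is strict). For property (b), suppose $u = v_0 \to v_1 \to \cdots \to v_n = v$ is a path in $G$ with $n \geq 2$; then $u < v_1 < v$ in $P$, so $v$ does not cover $u$ and therefore no arrow $u \to v$ exists in $G$.

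The main subtlety, and where I would spend the most care, is the loop bookkeeping when passing between a poset (which as a subset of $V \times V$ contains the diagonal) and its underlying directed graph in the WD sense (which by WD2 has no loops). The two definitions of transitive reduction in the excerpt handle loops differently, and Lemma \ref{lem:ON-MR06p2-16-1} is exactly the bridge that reconciles them; its correct application hinges on the identification $P^\circ = G^T$, which in turn relies on interpreting ``path'' consistently as having length $\geq 1$ in the definition of transitive closure. Once this convention is pinned down, both directions become routine consequences of the two preceding lemmas together with the antisymmetry of $P$ versus the acyclicity of $G$.
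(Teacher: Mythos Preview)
Your proposal is correct and follows essentially the same approach as the paper: in the forward direction you build the poset as the reflexive-transitive closure of $G$ and then invoke Lemmas \ref{lem:quaskegr-tranred} and \ref{lem:ON-MR06p2-16-1} in tandem, and in the reverse direction you verify WD2 and conditions (a), (b) directly from the covering relation. The only cosmetic difference is that the paper cites the order extension principle to obtain WD2 in the reverse direction, whereas you argue acyclicity directly from antisymmetry; both are equivalent given the remark that WD graphs are exactly DAGs.
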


\begin{proof}
Suppose $G$ is a quasi-skeleton WD graph with vertex set $V$.  Since $G$ satisfies WD2, the transitive closure $G^T$ is acyclic and so the subset of $V \times V$
\[
G^\ast := G^T \cup \{ (v,v) : v \in V\}
\]
is a poset on $V$.  By Lemma \ref{lem:quaskegr-tranred}, $G$ is the transitive reduction of $G^T$ in the sense of Definition \ref{def:TR-v1}.  Since $G^T$ is just the graph obtained from $G^\ast$ by removing the loop at each vertex, Lemmas \ref{lem:ON-MR06p2-16-1} and \ref{lem:quaskegr-tranred} together imply that $G$ is the transitive reduction of the poset $G^\ast$ in the sense of Definition \ref{def:TR-v2}.


Conversely, suppose $G$ is the transitive reduction of a poset $G^\dagger$ on $V$. Then $G$ is a subgraph of $G^\dagger$; since every poset embeds into a total order by the order extension principle, $G$ itself satisfies WD2 and so is a WD graph.  Since $G$ is a subgraph of a poset, it satisfies condition (a) in Definition \ref{def:qsWDg}.  Now suppose there is a path $p$ of length at least 2 from vertex $u$ to vertex $v$ in $G$; then $p$ is also a path of length at least 2 in $G^\dagger$.  If there is also an arrow $\alpha : u \to v$ in $G$, then by Definition \ref{def:TR-v2}, $v$ must cover $u$ in $G^\dagger$, contradicting the existence of the path $p$ in $G^\dagger$. Hence condition (b) in Definition \ref{def:qsWDg} also holds, and $G$ is a quasi-skeleton WD graph.
\end{proof}

\begin{rem}\label{rmk:1-1corr-qsWDgHDp}
Transitive reductions of posets, drawn in a specific way, are called \emph{Hasse diagrams} (see \cite[D29, 3.2.3]{handGT} for a precise definition).  Therefore, Theorem \ref{thm:ON-MR06p2-p15} says that there is a 1-1 correspondence between quasi-skeleton WD graphs and Hasse diagrams of posets.
\end{rem}


\section{Category of quasi-skeleton WD graphs}\label{sec:catQSWDG}

\paragraph \label{para:RandRprime} Given any quasi-skeleton WD graph $G$ with a finite set of vertices $V$, we can identify $G$ with the subset of $V \times V$
\begin{multline*}
R'(G) := \{ (u, v ) \in V \times V : \\
\text{ there is an arrow $u \to v$ in $G$} \}.
\end{multline*}
In \cite[Section 6.1]{LoMR1}, we defined $R(G)$ to be the subset of $V \times V$ obtained by forcing reflexivity and transitivity on $R'(G)$. 

Note that for two quasi-skeleton WD graphs $G_1, G_2$ with the same set of vertices, if $R'(G_1) \subseteq R'(G_2)$ then  $R(G_1) \subseteq R(G_2)$.  The converse, however, is not true: consider the graph $G_1$ 
\begin{equation*}
\xymatrix @R=-0.1pc {
& B   \\
A  \ar[dr]  \\
& C 
}
\end{equation*}
and the graph $G_2$
\begin{equation*}
\xymatrix @R=-0.1pc {
& B   \ar[dd]\\
A  \ar[ur]  \\
& C 
}
\end{equation*}
Both are quasi-skeleton WD graphs and $R(G_1) \subseteq R(G_2)$, but $R'(G_1) \nsubseteq R'(G_2)$.

\begin{defn}[categories $\mathcal{R}(V)$ and $\mathcal{R}$]
Given a finite set $V$, we define $\mathcal{R}(V)$ to be the category where the objects are quasi-skeleton WD graphs with vertex set $V$, and where we have a morphism $G_1 \to G_2$ between two quasi-skeleton WD graphs if there is an inclusion of sets $R(G_2) \subseteq R(G_1)$.

We also  define $\mathcal{R}$ to be the category where the objects are quasi-skeleton WD graphs (on any finite set), and where we declare a morphism $G_1 \to G_2$ between two quasi-skeleton WD graphs if there is an inclusion of sets $R(G_2) \subseteq R(G_1)$.
\end{defn}

\subparagraph Given any quasi-skeleton WD graph $G$, we can informally think of the set $R(G)$ as the set of constraints represented by $G$.  Therefore, every time we have a morphism $G_1 \to G_2$ in the category $\mathcal{R}$, it means that the graph $G_1$ encompasses the same or more constraints than the graph $G_2$, allowing us to think of $G_2$ as a ``generalization''' of $G_1$, or that $G_2$ represents a concept that is more abstract than that represented by $G_1$.




\begin{eg}
Let $G_1$ denote the quasi-skeleton WD graph
\begin{equation*}
\xymatrix @R=-0.1pc {
& B\ar[dr] & \\
A \ar[ur] \ar[dr] & & D \\
& C \ar[ur]
}
\end{equation*}
with vertex set $\{A, B, C, D\}$, and let $G_2$ denote the quasi-skeleton WD graph
\begin{equation*}
\xymatrix @R=-0.1pc {
& B   \\
A  \ar[dr] \ar[ur]  \\
& C 
}
\end{equation*}
with vertex set $\{A, B, C\}$.  We have a morphism $G_1 \to G_2$ in $\mathcal{R}$ since $R'(G_2) \subseteq R'(G_1)$, even though $G_1$ and $G_2$ have different sets of vertices.
\end{eg}

\paragraph \label{para:catRVHasseDseq} By Theorem \ref{thm:ON-MR06p2-p15} and Remark \ref{rmk:1-1corr-qsWDgHDp}, given a finite set $V$, the set of objects in the category $\mathcal{R}(V)$ is in 1-1 correspondence with the set of Hasse diagrams on the set $V$.  Note that the number of Hasse diagrams on a finite set forms the sequence A001035 in the On-Line Encyclopedia of Integer Sequences \cite{OEISposet}.  

By abuse of notation, let us also write $\mathcal{R}(V)$ to denote the set of objects in the category $\mathcal{R}(V)$.  Then we obtain a poset $(\mathcal{R}(V),\leq)$ by defining $G_1 \leq G_2$ iff $R(G_1) \subseteq R(G_2)$.  This gives a different way of thinking about the category $\mathcal{R}(V)$.  


The following lemma will be useful later on.  We omit the proof since it is clear.

\begin{lem}\label{lem:mtxdiffvsgraphmor}
Let $J= \{e_1, \cdots, e_m\}$ be a finite set, and let $G_1, G_2$ be two quasi-skeleton WD graphs with $J$ as the set of vertices.  For $k=1,2$, let $M_k$ denote the path matrix of $G_k$, i.e. $M_k(i,j)=1$ (resp.\ $0$) if and only if there is a path (resp.\ no paths) from $e_i$ to $e_j$ in $G_k$.  Then the following are equivalent:
\begin{itemize}
    \item[(i)] All the entries of $M_2-M_1$ are nonnegative.
    \item[(ii)] $R(G_1) \subseteq R(G_2)$.
    \item[(iii)] There is a morphism from $G_2$ to $G_1$ in the category $\mathcal{R}$.
\end{itemize}

\end{lem}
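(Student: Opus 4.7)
The plan is to verify that the three conditions are equivalent by reducing each one to a statement about directed paths in $G_1$ and $G_2$; the lemma is essentially a translation between three slightly different encodings of the same reachability information, which is presumably why the authors call the proof clear.

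First I would record the following explicit description of $R(G_k)$ for $k=1,2$: since $G_k$ is a quasi-skeleton WD graph and therefore a DAG (by the remark following the definition of WD graph), forcing reflexivity and transitivity on $R'(G_k)$ introduces no pairs beyond those accounted for by the path matrix together with the diagonal, giving
\[
R(G_k) = \{(e_i,e_j) : M_k(i,j) = 1\} \cup \{(e_i,e_i) : 1 \leq i \leq m\}.
\]
With this in hand, the equivalence (i)$\Leftrightarrow$(ii) is immediate: because $M_1$ and $M_2$ take values in $\{0,1\}$, every entry of $M_2 - M_1$ being nonnegative is the same as the pointwise implication $M_1(i,j)=1 \Rightarrow M_2(i,j)=1$, which (noting that the diagonal contribution to $R(G_k)$ is common to both graphs) is precisely the inclusion $R(G_1) \subseteq R(G_2)$. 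The equivalence (ii)$\Leftrightarrow$(iii) then holds by the very definition of the category $\mathcal{R}$, in which a morphism $G_2 \to G_1$ is declared to exist exactly when $R(G_1) \subseteq R(G_2)$.

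There is no real obstacle here; the only step requiring a moment of care is the explicit description of $R(G_k)$ displayed above, which uses the acyclicity of $G_k$ to rule out nontrivial cycles that could otherwise contribute off-diagonal pairs $(e_i,e_i)$ with $i$ involved in a cycle. Once this observation is in place, the remaining content of the lemma reduces to unpacking the definitions of the path matrix, of $R(G_k)$, and of morphisms in $\mathcal{R}$, so a full proof would be essentially a three-line chain of bi-implications.
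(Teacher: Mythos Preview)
Your proposal is correct and follows the natural route: unpack $R(G_k)$ in terms of the path matrix, observe that nonnegativity of $M_2-M_1$ for $\{0,1\}$-matrices is the entrywise implication $M_1(i,j)=1\Rightarrow M_2(i,j)=1$, and then invoke the definition of morphisms in $\mathcal{R}$. The paper in fact omits the proof entirely, calling it clear, so there is nothing further to compare against; your argument is exactly the kind of short verification the authors had in mind. One minor wording slip: in your final paragraph you refer to ``off-diagonal pairs $(e_i,e_i)$'', but these are diagonal pairs---and in any case the acyclicity observation, while true, is not strictly needed for the displayed description of $R(G_k)$ to hold.
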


\section{Consistent wiring diagrams}\label{sec:consistWD}





We begin by fixing some terminology that will facilitate the discussion that follows.  

We say a finite directed graph $G$ is a \emph{path graph} if it is of the form
\[
\xymatrix{
\bullet \ar[r] & \bullet \ar[r] & \cdots \ar[r] & \bullet
}
\]

\begin{defn}\label{defn:simple}
Let $X$ be a finite set.
\begin{itemize}
    \item Give a set $X$, a \emph{sequence of subsets of $X$} is a sequence $S=(S_i)_{i \geq 1}$
    where each term $S_i$ is a subset of $X$.  
    \item A \emph{sequence in $X$} is a sequence $(s_i)$ where each term $s_i$ is an element of $X$.
    \item A \emph{simple} sequence of subsets of $X$ is a sequence $(S_i)$ of subsets of $X$ such that the sets $S_i$ are  pairwise disjoint.
    \item A \emph{simple} sequence in $X$ is a sequence $(s_i)$ in $X$ such that all the terms are distinct.
\end{itemize}
\end{defn}

\paragraph \label{para:seqsingletonconv} Given any sequence  in $X$, say
\[
 s_1, s_2, \cdots
\]
we can construct a sequence $(S_i)$ of subsets of $X$  by replacing each term with the singleton set containing that term:
\[
  \{s_1\}, \{s_2\}, \cdots
\]
i.e.\ by taking $S_i = \{s_i\}$ for all $i$.  Conversely, given a sequence $(S_i)$ of subsets of $X$ where each term $S_i$ is a singleton set, say 
\[
\{s_1\}, \{s_2\}, \cdots
\]
we can associate a sequence in $X$  by replacing each term with its unique element:
\[
s_1, s_2, \cdots
\]
Using the above constructions, we will sometimes confuse sequences of singleton subsets of $X$ with sequences in $X$.  Through the above construction, definitions for sequences of subsets of $X$ will also apply to sequences in $X$. 

Sometimes, given a sequence of subsets of $X$ or a sequence in $X$, we want to extract information that focuses on a specific collection of elements.  

\begin{defn}
Let $X$ be a finite set, and $I$ a nonempty subset of $X$. Given a sequence $S = (S_i)$ of subsets of $X$, we define $m_I(S)$ to be the sequence of subsets of $X$ obtained by replacing each term $S_i$ with the set $I \cap S_i$, and then removing any term that is the empty set.

Given a sequence $s=(s_i)$ in $X$, we similarly define $m_I(s)$ by  regarding $s$ as a sequence of singleton subsets of $X$ as in \ref{para:seqsingletonconv}.  Equivalently, we define $m_I(s)$ to be the sequence obtained from $s$ by removing any term that is not in $I$.
\end{defn}


\begin{eg}
Suppose $X=\{A, B, C, D, E, F, G, H\}$ and $I=\{A, B, C, D\}$.  If $S$ denotes the sequence of subsets of $A$
\[
\{C\}, \{A, G, H\}, \{G\}, \{B, D, H\}
\]
then $m_I(S)$ is the sequence
\[
\{C\}, \{A\}, \{B, D\}.
\]
If $s$ denotes the sequence in $X$
\[
G, B, C, E, A
\]
then $m_I(s)$ denotes the sequence
\[
B, C, A.
\]
\end{eg}

\begin{defn}[simple wiring diagram]
We say a wiring diagram $W$ is \emph{simple} if it satisfies the following two properties:
\begin{itemize}
    \item[(i)] Every vertex in $W$ has exactly one label.
    \item[(ii)] All the vertex labels in $W$ are distinct.
\end{itemize}
\end{defn}

\begin{defn}[consistent wiring diagram]\label{def:consWD}
Let $X$ be a finite set.  Suppose $S=(S_i)$ is a  sequence of subsets of $X$, and $W$ is a simple, quasi-skeleton wiring diagram where the set $I$  of vertex labels is a subset of $X$.  We say $S$ is \emph{consistent} with $W$ (or that $W$ is consistent with $S$) if the following condition holds:
\begin{itemize}
    \item For any $x, y \in (\cup_i S_i) \cap I$, say $x \in S_{i_1}$ and $y \in S_{i_2}$ and $x,y$ are the labels of vertices $u, v$ in $W$, respectively, we have $i_1 < i_2$ whenever there is a path from $u$ to $v$ in $W$.
\end{itemize}
\end{defn}

That is, we say $S$ is consistent with $W$ if, whenever there is a path from $u$ to $v$ in $W$,  every instance of the vertex label at $u$ appears before every instance of the vertex label at $v$ within  the sequence $S$.  Informally, $S$ is consistent with $W$ if all the relations on vertex labels of $W$ imposed by the arrows of $W$ are respected by $S$.

By regarding a sequence in $X$ as a sequence of subsets of $X$ via the construction in \ref{para:seqsingletonconv}, we can also define what it means for a sequence in $X$ to be consistent with a simple, quasi-skeleton wiring diagram.

\begin{eg}
Suppose $X =\{A, B, C, D, E\}$ and $I=\{A, B, C\}$.
Let $W$ denote the wiring diagram
\begin{equation*}
\xymatrix @R=-1pc {
& {\begin{matrix} \bullet \\ B \end{matrix}}   \\
{\begin{matrix} \bullet \\ A \end{matrix}}  \ar@<1ex>[dr] \ar@<1ex>[ur]  \\
& {\begin{matrix} \bullet \\ C \end{matrix}} 
}
\end{equation*}
which is both simple and quasi-skeleton and where all the vertex labels belong to $I$.  The simple sequence of subsets of $X$
\[
\{D\}, \{A, E\}, \{B, C\}
\]
is consistent with $W$, whereas the sequence
\[
\{D\}, \{A, B\}, \{C, E\}
\]
is not consistent with $W$.
\end{eg}

In Lemma \ref{lem:WDconsisreform-1}  below, we will  give an alternative formulation of Definition \ref{def:consWD} in terms of relations in posets.  To do this, we need to define some operations that convert  wiring diagrams, sequences, and graphs among one another.

\begin{defn}[graph $\wtg(W)$ of a wiring diagram $W$]\label{def:wtg}
Let $W$ be a simple, quasi-skeleton wiring diagram.  Let $V$ denote the set of vertices of $W$ and, for each $v \in V$, let $s_v$ denote the unique label at the vertex $v$.  We define $\wtg(W)$ to be the directed graph where the set of vertices is $\{s_v\}_{v \in V}$, and where there is an arrow $s_u \to s_v$ if and only if there is an arrow $u \to v$ in the underlying graph of $W$.
\end{defn}

Since $W$ is a simple wiring diagram in the definition above, one can think of $\wtg(W)$ as the graph obtained from $W$ by  replacing every vertex $v$ with its  label $s_v$.

\begin{defn}[graph $\stg(S)$ of a sequence $S$]\label{defn:gra-op}
Let $X$ be a finite set, and let $S=(S_i)_{1 \leq i \leq n}$ be  a finite, simple sequence of nonempty subsets of $X$.  We define $\stg (S)$ to be the graph $G$ constructed as follows:
\begin{itemize}
    \item[(i)] The set of vertices is given by the disjoint union $\coprod_{i=1}^n S_i$.
    \item[(ii)] For each $1 \leq i \leq n-1$, there is a single arrow from each element of $S_i$ to each element of $S_{i+1}$.
\end{itemize}
Note that $\stg(S)$ is a quasi-skeleton WD graph by construction.
\end{defn}

\begin{eg}
Suppose $X=\{A, B, C, D, E, F\}$ and $S$ is the finite simple sequence of subsets of $X$
\[
  \{A\}, \{B, C\}, \{D, E\}, \{F\}
\]
Then $\stg(S)$ is the graph
\[
\xymatrix @R=-0.1pc {
& B \ar[r] \ar[ddr] & D \ar[dr] & \\
A \ar[ur] \ar[dr] & & & F \\
& C \ar[uur] \ar[r] & E \ar[ur] &
}
\]
\end{eg}

We can easily define an inverse operation of $\stg (-)$ on graphs that arise under the operation $\stg(-)$:

\begin{defn}\label{defn:gts-op}
Suppose $G$ is a directed graph where
\begin{itemize}
    \item[(i)] There is a partition $V = V_1 \coprod \cdots \coprod V_n$ of the set $V$ of vertices where  the $V_i$ are all nonempty.
    \item[(ii)] For each $1 \leq i \leq n-1$, there is a single arrow from each element of $V_i$ to each element of $V_{i+1}$, and these are all the arrows in $G$.
\end{itemize}
We define $\gts (G)$ to be the sequence
\[
 V_1, V_2, \cdots, V_n
\]
of subsets of $V$.
\end{defn}

In particular, any path graph satisfies both conditions (i) and (ii) in Definition \ref{defn:gts-op}.  If $P$ is the path graph
\[
  \xymatrix{
  A_1 \ar[r] & A_2 \ar[r] & \cdots \ar[r] & A_n
  }
\]
with vertices $A_1, \cdots, A_n$ and arrows as shown, then  $\gts (P)$ is the sequence 
\[
 \{A_1\}, \{A_2\}, \cdots, \{A_n\}.
\]

The definition of a consistent wiring diagram in Definition \ref{def:consWD} can now be rephrased for finite, simple sequences of subsets of $X$.  

\begin{lem}\label{lem:WDconsisreform-1}
Let $X$ be a finite set.  Suppose $S=(S_i)$ is a finite, simple sequence of subsets of $X$, and $W$ is a simple, quasi-skeleton wiring diagram such that the set $I$  of vertex labels is a subset of $X$.   Set
\[
U = \{ l \in I : l \text{ appears in } S\} = (\cup_i S_i) \cap I.
\]
Then  $S$ is consistent with $W$ if and only if
\begin{equation}\label{eq:MR06p2-18-1}
 R(\wtg(W)) \cap (U \times U) \subseteq R(\stg(m_I(S))).
\end{equation}
\end{lem}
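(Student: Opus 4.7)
The plan is to unwind both sides of the claimed equivalence into the same elementary statement about relative positions in $S$. Since $W$ is simple, each vertex of $W$ is uniquely determined by its label in $I$, and the simplicity of $S$ ensures each $x \in U$ lies in a unique term $S_{i_1}$. The proof then reduces to identifying the relations $R(\wtg(W)) \cap (U \times U)$ and $R(\stg(m_I(S)))$ as concrete subsets of $U \times U$ and checking that the inclusion between them matches the condition in Definition \ref{def:consWD}.

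First, I would read off $R(\wtg(W))$ explicitly. By Definition \ref{def:wtg}, $\wtg(W)$ has vertex set $I$, and $(x,y)$ is an arrow iff the vertices of $W$ labelled $x,y$ are joined by an arrow. Taking the reflexive-transitive closure, $R(\wtg(W))$ is the diagonal on $I$ together with the pairs $(x,y)$ for which there is a directed path (of positive length) from the vertex labelled $x$ to the vertex labelled $y$ in $W$. Intersecting with $U \times U$ simply restricts to labels actually appearing in $S$.

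Next, I would compute $R(\stg(m_I(S)))$. The vertex set of $\stg(m_I(S))$ is the disjoint union $\coprod (I \cap S_i)$ over indices $i$ with $I \cap S_i \neq \emptyset$; by the simplicity of $S$ these sets are pairwise disjoint in $X$, so this disjoint union is naturally identified with the set $U$. By Definition \ref{defn:gra-op}, between consecutive nonempty $I$-intersections every arrow is present, so $(x,y) \in R(\stg(m_I(S)))$ iff $x=y$ or the unique term of $m_I(S)$ containing $x$ has a strictly smaller index than the one containing $y$. Because $m_I$ only deletes empty-intersection terms and preserves the relative order of the surviving ones, this index comparison is equivalent to $i_1 < i_2$ where $x \in S_{i_1}$ and $y \in S_{i_2}$.

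Combining these two descriptions, the inclusion \eqref{eq:MR06p2-18-1} becomes the assertion: for every $(x,y) \in U \times U$, if there is a path from the vertex labelled $x$ to the vertex labelled $y$ in $W$, then $i_1 < i_2$. The diagonal case $x=y$ is automatic on both sides, so this is exactly the condition in Definition \ref{def:consWD}. There is no substantive obstacle here; the argument is pure bookkeeping, and the only subtleties worth highlighting in the write-up are (i) using the simplicity of $S$ to identify $\coprod (I \cap S_i)$ with $U$ and to make the indices $i_1, i_2$ well-defined, and (ii) noting that $m_I$ preserves the relative order of the elements of $U$, so the indexing in $m_I(S)$ and in $S$ produce the same strict inequalities.
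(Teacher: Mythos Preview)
Your proposal is correct and follows essentially the same approach as the paper's proof: both handle the diagonal of $U \times U$ as automatic, then for distinct $x,y \in U$ identify membership in the left-hand side with the existence of a path in $W$ and membership in the right-hand side with the index inequality $i_1 < i_2$, so that the inclusion reduces to the condition in Definition~\ref{def:consWD}. Your write-up is somewhat more explicit about the bookkeeping (identifying $\coprod_i (I\cap S_i)$ with $U$ via simplicity, and noting that $m_I$ preserves relative order), but the logical skeleton is the same.
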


\begin{proof}
Note that for every $u \in U$, the ordered pair $(u,u)$ lies in both the left-hand side and right-hand side of \eqref{eq:MR06p2-18-1}.  

Now take any distinct elements $u, v \in U$.  By the construction of the operation $R(-)$ (see \ref{para:RandRprime}), there is a path from $u$ to $v$ in $W$ if and only if $(u,v)$ is an element of the left-hand side of \eqref{eq:MR06p2-18-1}.  On the other hand, observe that $u, v$ both appear in the sequence of subsets $m_I(S)$.  Suppose $u \in S_{i_1}$ and $v \in S_{i_2}$.  Then $i_1 < i_2$ if and only if there is a path from $u$ to $v$ in $\stg(m_I(S))$, if and only if $(u,v)$ is an element of the right-hand side of \eqref{eq:MR06p2-18-1}.  The lemma then follows.
\end{proof}


\begin{defn}[restriction of quasi-skeleton WD graph to a subset of vertices]\label{def:Gressubver}
Suppose $G$ is a quasi-skeleton WD graph with set of vertices $V$.  For any subset $U \subseteq V$, we define the graph $G_U$ to be the transitive reduction (see Definition \ref{def:TR-v2}) of the poset
\[
    R(G) \cap (U \times U).
\]
\end{defn}

\begin{rem}\label{rem:Gressubver-1}
In Definition \ref{def:Gressubver}, note that:
\begin{itemize}
    \item[(a)]  $R(G) \cap (U \times U) = R(G_U)$.  In particular, this implies $R(G_U) \subseteq R(G)$, i.e.\ there is a morphism $G \to G_U$ in the category $\mathcal{R}$.
    \item[(b)] For any $u, v \in U$, there is a path from $u$ to $v$ in  $G_U$ if and only if there is a path from $u$ to $v$ in $G$.
    \item[(c)] $G_U$ is again a quasi-skeleton WD graph.  This is because $G_U$ is the transitive reduction of a poset by construction, and so is a quasi-skeleton WD graph by Theorem \ref{thm:ON-MR06p2-p15}.
\end{itemize} 
\end{rem}

We now rephrase Lemma \ref{lem:WDconsisreform-1} in terms of the category $\mathcal{R}$.


\begin{cor}\label{cor:WDconsisreform-1-1}
Assume the setup of Lemma \ref{lem:WDconsisreform-1}.
Then  $S$ is consistent with $W$ if and only if there is a morphism
\[
\stg (m_I(S)) \to (\wtg (W))_U
\]
in the category $\mathcal{R}$.
\end{cor}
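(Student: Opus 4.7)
The plan is to reduce the corollary to Lemma \ref{lem:WDconsisreform-1} by translating the set-theoretic inclusion in that lemma into the language of the category $\mathcal{R}$. All the work has essentially been done: the categorical statement is a direct repackaging of the inclusion condition, so the proof should be short and will amount to matching definitions.

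First, I would identify the relevant objects of $\mathcal{R}$. The graph $\stg(m_I(S))$ has vertex set $U$, since $m_I(S)$ consists precisely of those singletons $\{x\}$ with $x\in I$ appearing in some $S_i$, and the graph $(\wtg(W))_U$ also has vertex set $U$ by Definition \ref{def:Gressubver}. Both are quasi-skeleton WD graphs: the first by construction of $\stg(-)$ (see Definition \ref{defn:gra-op}), and the second by Remark \ref{rem:Gressubver-1}(c). So both are bona fide objects of $\mathcal{R}$.

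Second, I would invoke Remark \ref{rem:Gressubver-1}(a) to rewrite the left-hand side of the inclusion in Lemma \ref{lem:WDconsisreform-1}:
\[
R(\wtg(W)) \cap (U\times U) = R\bigl((\wtg(W))_U\bigr).
\]
Substituting this into the equivalent condition provided by Lemma \ref{lem:WDconsisreform-1}, consistency of $S$ with $W$ becomes the inclusion
\[
R\bigl((\wtg(W))_U\bigr) \subseteq R\bigl(\stg(m_I(S))\bigr).
\]

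Finally, by the very definition of morphisms in the category $\mathcal{R}$, an inclusion $R(G_2) \subseteq R(G_1)$ is precisely a morphism $G_1 \to G_2$. Applying this with $G_1 = \stg(m_I(S))$ and $G_2 = (\wtg(W))_U$ yields the claimed morphism in $\mathcal{R}$, and the equivalence follows. The main (and only) obstacle is cosmetic: ensuring that the two graphs are indeed objects of $\mathcal{R}$ on the same vertex set $U$, which is immediate from the definitions; the substantive content has already been absorbed into Lemma \ref{lem:WDconsisreform-1} and Remark \ref{rem:Gressubver-1}.
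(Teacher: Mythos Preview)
Your proposal is correct and follows essentially the same route as the paper: invoke Lemma \ref{lem:WDconsisreform-1}, use Remark \ref{rem:Gressubver-1}(a) to rewrite $R(\wtg(W))\cap(U\times U)$ as $R((\wtg(W))_U)$, and then translate the resulting inclusion into a morphism in $\mathcal{R}$ by definition. One cosmetic slip: $m_I(S)$ need not consist of singletons (the $S_i$ are subsets, so $I\cap S_i$ can have several elements), but since $S$ is simple the vertex set of $\stg(m_I(S))$ is still the disjoint union $\coprod_i (I\cap S_i)=U$, so your conclusion stands.
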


\begin{proof}
By Lemma \ref{lem:WDconsisreform-1}, $S$ is consistent with $W$ if and only if 
\[
R(\wtg (W)) \cap (U \times U) \subseteq R(\stg (m_I(S))),
\] 
i.e.\
\[
 R ( (\wtg (W))_U) \subseteq R(\stg (m_I(S)))
\]
by Remark \ref{rem:Gressubver-1}(a).  From the definition of the category $\mathcal{R}$, this is equivalent to having a morphism 
\[
 \stg (m_I(S)) \to ( \wtg (W))_U
\]
in $\mathcal{R}$.
\end{proof}

We  now prove a universal property for a certain type of path graphs in the category $\mathcal{R}$.

\begin{prop}\label{prop:ON06p2-23-1}
Suppose $P$ is a path graph and $G$ is a quasi-skeleton WD graph such that the vertex set $I$ of $G$ is a subset of the vertex set of $P$.  Then any morphism $\alpha : P \to G$ in the category $\mathcal{R}$ factors uniquely through the natural map $\beta : P \to P_I$:
\[
\xymatrix{
  P \ar[rr]^{\alpha} \ar[dr]_\beta & & G \\
  & P_I \ar[ur] & 
}
\]
\end{prop}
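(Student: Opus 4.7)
The plan exploits the fact that the category $\mathcal{R}$ is \emph{thin}: between any two objects there is at most one morphism, because the condition for a morphism $G_1 \to G_2$ to exist, namely $R(G_2) \subseteq R(G_1)$, is purely set-theoretic. This gives uniqueness of the factorization for free; moreover, once any candidate $\gamma : P_I \to G$ is shown to exist in $\mathcal{R}$, thinness automatically forces $\gamma \circ \beta = \alpha$, since both sides would then be morphisms $P \to G$ and there can be at most one such.

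It therefore suffices to verify the inclusion $R(G) \subseteq R(P_I)$. First, since the vertex set of $G$ is $I$, the set $R(G)$ is built as a subset of $I \times I$, so in particular $R(G) \subseteq I \times I$. Second, the hypothesis that $\alpha : P \to G$ is a morphism in $\mathcal{R}$ is precisely the statement $R(G) \subseteq R(P)$. Intersecting these two inclusions yields $R(G) \subseteq R(P) \cap (I \times I)$.

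Finally, Remark \ref{rem:Gressubver-1}(a) identifies $R(P) \cap (I \times I)$ with $R(P_I)$, so $R(G) \subseteq R(P_I)$ and the definition of $\mathcal{R}$ produces the required morphism $\gamma : P_I \to G$. I do not foresee any serious obstacle: the entire argument is a short unwinding of the definitions of $\mathcal{R}$ and of the restriction operation $(-)_I$. In fact, the hypothesis that $P$ is a path graph plays no explicit role in the argument --- the same reasoning works whenever the vertex set of $G$ is contained in the vertex set of the source object --- which suggests that the path-graph assumption is there to match how the proposition will be invoked in the applications to sequential data rather than because it is needed for the universal property itself.
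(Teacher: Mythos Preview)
Your argument is correct and essentially identical to the paper's: both reduce to the chain $R(G) \subseteq R(P) \cap (I \times I) = R(P_I) \subseteq R(P)$ via Remark~\ref{rem:Gressubver-1}(a). Your explicit remark that thinness of $\mathcal{R}$ handles uniqueness, and your observation that the path-graph hypothesis is not actually used, are both accurate and worth noting.
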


Note that $P_I$, as defined in Definition \ref{def:Gressubver},   is merely the path graph obtained from $P$ by deleting all the vertices not in $I$; it  can also be written as $\stg (m_I (\gts (P)))$.  The morphism $\beta$ corresponds to the inclusion 
\[
R(P_I) \subseteq R(P).
\]

\begin{proof}
Take  any morphism $\alpha : P \to G$ in $\mathcal{R}$.  This means that there is an inclusion of sets $R(G) \subseteq R(P)$.  Note that $R(G) \subseteq I \times I$, and so we have
\[
R(G) \subseteq R(P) \cap (I \times I) \subseteq R(P).
\]
Also note 
\[
R(P_I) = R(P) \cap (I \times I)
\]
by Remark \ref{rem:Gressubver-1}(a).  Hence the inclusions above can be rewritten as 
\[
R(G) \subseteq R(P_I)  \subseteq R(P)
\]
which gives the desired factorization of $\alpha$.
\end{proof}

\begin{rem}\label{rem:graphflattening-char}
Given a path graph $P$ and a quasi-skeleton WD graph $G$, if the vertex set of $G$ is not a subset of that of $P$, then there cannot be any morphism $P \to G$ in the category $\mathcal{R}$.  In Proposition \ref{prop:ON06p2-23-1}, note that the vertex set of $P_I=\stg (m_I (\gts (P)))$ is exactly $I$.  
Therefore, Proposition \ref{prop:ON06p2-23-1}  tells us that any morphism from a path graph $P$ to a quasi-skeleton WD graph $G$ must factor through \emph{a path graph whose vertex set is the same as that of $G$}.   As a consequence, path graphs that have the same vertex set as a given wiring diagram 
 play an important role in understanding the connection between data  (represented by $P$) and wiring diagrams (with underlying graph  $G$).
\end{rem}


Remark \ref{rem:graphflattening-char} motivates us to make Definitions \ref{def:flattening-graph} and \ref{def:flattening} below.

\begin{defn}[flattening of a quasi-skeleton WD graph]\label{def:flattening-graph}
Suppose $G$ is a quasi-skeleton WD graph with set of vertices $V$.  We say a graph $P$ is a \emph{flattening} of $G$ if the following conditions are satisfied:
\begin{itemize}
    \item[(i)] $P$ is a path graph.
    \item[(ii)] The set of vertices of $P$ is also $V$.
    \item[(iii)] There is a morphism $P \to G$ in the category $\mathcal{R}$.
\end{itemize}
\end{defn}

\begin{defn}[flattening of a wiring diagram]\label{def:flattening}
Suppose $W$ is a quasi-skeleton wiring diagram with set of vertices  $V$.  We say a wiring diagram $W'$ is a \emph{flattening} of $W$ if the following are satisfied:
\begin{itemize}
    \item[(a)] The underlying graph of $W'$ is a flattening of the underlying graph of $W$.
    \item[(b)] For each $v \in V$, the set of vertex labels at $v$ in $W$ coincides with the set of vertex labels at $v$ in $W'$.
\end{itemize}
\end{defn}

Note that if $W$ is a simple quasi-skeleton wiring diagram, then any flattening of $W$ is also a simple quasi-skeleton wiring diagram.


Intuitively, a flattening $W'$ of a wiring diagram $W$ is a possible outcome when we try to `flatten' $W$ into a 1-dimensional diagram.  Of course, there can be numerous possible flattenings of the same wiring diagram $W$.


\begin{eg}\label{eg:flatteningsofgraph}
Suppose $G$ is the quasi-skeleton WD graph 
\begin{equation*}
\xymatrix @R=-0.1pc {
& B\ar[dr] & \\
A \ar[ur] \ar[dr] & & D \\
& C \ar[ur]
}
\end{equation*}  Then $G$ has two possible flattenings: 
\begin{equation*}
\xymatrix{
A \ar[r] & B \ar[r] & C\ar[r] & D
}
\end{equation*}
and 
\begin{equation*}
\xymatrix{
A \ar[r] & C \ar[r] & B\ar[r] & D
}
\end{equation*}
\end{eg}


Let us now introduce the analogue of  Definition \ref{def:Gressubver} in the context of wiring diagrams:

\begin{defn}[restriction of wiring diagram to a subset of vertices]\label{def:Wressubver}
Suppose $W$ is a quasi-skeleton wiring diagram with underlying graph $G$ and set of vertices $V$.  For any subset $U \subseteq V$, we define the wiring diagram $W_U$ to be such that:
\begin{itemize}
    \item[(i)] The underlying graph of $W_U$ is $G_U$ as constructed in Definition \ref{def:Gressubver}.
    \item[(ii)] For every $u \in U$, the set of vertex labels at $u$ in $W_U$ is the same as the set of vertex labels at $u$ in $W$.
\end{itemize}
\end{defn}


\paragraph \label{para:setup-flattenunivprop} Consider the following setup.
\begin{itemize}
    \item $X$ is a finite set.
    \item $s=(s_i)_{1 \leq i \leq q}$ a finite simple sequence in $X$.
    \item $W$ is a simple quasi-skeleton wiring diagram.
    \item $G$ is the underlying graph of $W$.
    \item $V$ is the vertex set of $G$.
    \item For each $v \in V$, $l_v \in X$ is the vertex label at $v$ in $W$.
    \item $\widetilde{V} := \{ l_v : v \in V\}$ is the set of vertex labels in $W$.
    \item $U :=  \{ v \in V : l_v \text{ appears in } s\}$ is the set of vertices in $W$ where the corresponding labels  appear in the sequence $s$.
     \item $\widetilde{U} := \{ l_u : u \in U\}$ is the set of vertex labels  in $W$ that appear in $s$.
\end{itemize}

In the following theorem, we collect the various ways of  seeing whether a sequence (representing data) is consistent with a wiring diagram (representing a concept).

\begin{thm}\label{thm:ON-MR06p2-20-1}
Assume the setup and notation of \ref{para:setup-flattenunivprop}.  The following are equivalent:
\begin{itemize}
    \item[(i)] $s$ is consistent with $W$.
    \item[(ii)] $R(\wtg (W)) \cap (\widetilde{U} \times \widetilde{U}) \subseteq R(\stg (m_{\widetilde{V}}(s))$.
    \item[(iii)] There is a morphism $(\stg (s))_{\widetilde{U}} \to (\wtg (W))_{\widetilde{U}}$ in $\mathcal{R}$.
    \item[(iv)] There is a morphism $\stg (s) \to \wtg (W_U)$ in $\mathcal{R}$.
    \item[(v)] There is a morphism  $\stg (s) \to Z$ in $\mathcal{R}$ for some flattening $Z$ of $\wtg (W_U)$.
\end{itemize}
\end{thm}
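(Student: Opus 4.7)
The plan is to prove the chain of equivalences $(i) \Leftrightarrow (ii) \Leftrightarrow (iii) \Leftrightarrow (iv) \Leftrightarrow (v)$ by reassembling earlier results, after recording two bookkeeping identifications that let me translate freely between the notations on the various statements. First, $\stg(m_{\widetilde{V}}(s)) = (\stg(s))_{\widetilde{U}}$, since both are path graphs on vertex set $\widetilde{U}$ ordered by first occurrence in $s$. Second, $(\wtg(W))_{\widetilde{U}} = \wtg(W_U)$: simplicity of $W$ makes $v \mapsto l_v$ a bijection $V \to \widetilde{V}$ that identifies $R(G) \cap (U \times U)$ with $R(\wtg(W)) \cap (\widetilde{U} \times \widetilde{U})$, and taking transitive reductions respects this identification, while Definition \ref{def:Wressubver} ensures the labels on $W_U$ are exactly $\widetilde{U}$.

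For $(i) \Leftrightarrow (ii)$, I would apply Lemma \ref{lem:WDconsisreform-1} with $I = \widetilde{V}$; viewing $s$ as a sequence of singletons, the set $(\cup_i S_i) \cap I$ that appears in that lemma is precisely $\widetilde{U}$, and its conclusion is (ii) verbatim. For $(ii) \Leftrightarrow (iii)$, I would combine Remark \ref{rem:Gressubver-1}(a), which rewrites the left-hand side of (ii) as $R((\wtg(W))_{\widetilde{U}})$, with the first identification above; the resulting inclusion $R((\wtg(W))_{\widetilde{U}}) \subseteq R((\stg(s))_{\widetilde{U}})$ is, by the definition of $\mathcal{R}$, exactly the existence of the morphism in (iii). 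For $(iii) \Leftrightarrow (iv)$, I would use the second identification to rewrite the codomain in (iii) as $\wtg(W_U)$ and then invoke Proposition \ref{prop:ON06p2-23-1} with $P = \stg(s)$ and $G = \wtg(W_U)$: the vertex set $\widetilde{U}$ of $\wtg(W_U)$ is a subset of the vertices of $\stg(s)$ by the very definition of $\widetilde{U}$, so morphisms $\stg(s) \to \wtg(W_U)$ correspond bijectively to morphisms $(\stg(s))_{\widetilde{U}} \to \wtg(W_U)$, the reverse direction arising from pre-composition with the natural map $\stg(s) \to (\stg(s))_{\widetilde{U}}$.

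For $(iv) \Leftrightarrow (v)$, the forward direction I would establish by taking $Z := (\stg(s))_{\widetilde{U}}$: this is a path graph whose vertex set is $\widetilde{U}$, which coincides with the vertex set of $\wtg(W_U)$, and the factorization from the previous step supplies a morphism $Z \to \wtg(W_U)$, witnessing that $Z$ is a flattening of $\wtg(W_U)$; the required $\stg(s) \to Z$ is the natural restriction morphism. The reverse direction $(v) \Rightarrow (iv)$ is pure composition in $\mathcal{R}$ using the morphism $Z \to \wtg(W_U)$ guaranteed by Definition \ref{def:flattening-graph}(iii). There is no deep obstacle here, since the substantive work was already done in Lemma \ref{lem:WDconsisreform-1} and Proposition \ref{prop:ON06p2-23-1}; the main care is in verifying the two identifications carefully and keeping track of which vertex set each graph lives on before invoking the universal property.
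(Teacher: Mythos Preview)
Your proposal is correct and follows essentially the same route as the paper's proof: the two bookkeeping identifications you record are exactly the ones the paper uses (the first appears in its proof of $(i)\Leftrightarrow(iii)$, the second in $(iii)\Leftrightarrow(iv)$), and your invocations of Lemma~\ref{lem:WDconsisreform-1}, Remark~\ref{rem:Gressubver-1}(a), Proposition~\ref{prop:ON06p2-23-1}, and Definition~\ref{def:flattening-graph}(iii) match the paper's argument step for step. The only cosmetic difference is that you run a linear chain $(i)\Leftrightarrow(ii)\Leftrightarrow(iii)\Leftrightarrow(iv)\Leftrightarrow(v)$, whereas the paper proves $(i)\Leftrightarrow(ii)$ and $(i)\Leftrightarrow(iii)$ separately (via Corollary~\ref{cor:WDconsisreform-1-1}) before continuing; the content is identical.
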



\begin{proof}
The equivalence between (i) and (ii) was proved in Lemma \ref{lem:WDconsisreform-1}.

By Corollary \ref{cor:WDconsisreform-1-1}, (i) is equivalent to the existence of a morphism $\stg (m_{\widetilde{V}}(s)) \to (\wtg (W))_{\widetilde{U}}$ in $\mathcal{R}$.  Since $m_{\widetilde{U}}(s)=m_{\widetilde{V}}(s)$, we have $\stg(m_{\widetilde{V}}(s))=\stg (m_{\widetilde{U}}(s))$.  We also have $\stg (m_{\widetilde{U}}(s))= (\stg (s))_{\widetilde{U}}$.  The equivalence between (i) and (iii) then follows.

Let us assume (iii) holds.  Note that  $(\wtg (W))_{\widetilde{U}} = \wtg (W_U)$.   Pre-composing the morphism in (iii) with the natural map $\stg (s) \to (\stg (s))_{\widetilde{U}}$  gives the composition
\[
\stg (s) \to (\stg (s))_{\widetilde{U}} \to \wtg (W_U)
\]
in $\mathcal{R}$.  Conversely, suppose there is a morphism $\stg (s) \to \wtg (W_U)$ in $\mathcal{R}$.  Note that $\wtg(W_U)$ is a quasi-skeleton WD graph by Remark \ref{rem:Gressubver-1}(c).  Since the set of vertices of $\wtg (W_U)$ is $\widetilde{U}$, which is a subset of the set of vertices of $\stg (s)$, Proposition \ref{prop:ON06p2-23-1} applies and says that this morphism factors as 
\[
\stg (s) \to (\stg (s))_{\widetilde{U}} \to \wtg (W_U).
\]
Hence (iii) and (iv) are equivalent.

Assuming (iv), it is straightforward to see that  $(\stg (s))_{\widetilde{U}}$ is a flattening of $\wtg (W_U)$; the second half of the argument in the previous paragraph then shows that  (v) follows.  That (v) implies (iv) follows immediately from condition (iii) in Definition \ref{def:flattening-graph}.
\end{proof}

\paragraph \label{para:mainthmdiscussion} Assuming the setup of \ref{para:setup-flattenunivprop}, we have the natural morphisms  $\stg(s) \overset{\beta}{\to} (\stg (s))_{\widetilde{U}}$ and $\wtg (W) \overset{\gamma}{\to} (\wtg (W))_{\widetilde{U}}$ in $\mathcal{R}$ by Remark \ref{rem:Gressubver-1}(a).  The equivalence of (i) and (iii) in Theorem \ref{thm:ON-MR06p2-20-1} says that $s$ is consistent with $W$ if and only if there exists a morphism $\delta$ in $\mathcal{R}$ as shown:
\begin{equation}\label{eq:ON06p2-28-1}
\xymatrix{
  \stg (s) \ar[r]^\beta & (\stg (s))_{\widetilde{U}} \ar@{.>}[d]^\delta \\
  \wtg (W) \ar[r]^\gamma & (\wtg (W))_{\widetilde{U}}
}
\end{equation}
From the proof of the theorem, we know $(\stg (s))_{\widetilde{U}}$ is a flattening of $(\wtg (W))_{\widetilde{U}}$.


In applications, given a sequence $s$ that represents data collected from the sensors of a machine, we would want to identify all possible wiring diagrams $W$ representing concepts that are consistent with $s$.  If one has already determined the set $\widetilde{U}$ of vertex labels of $W$ that appear in $s$, then the morphism $\beta$ in \eqref{eq:ON06p2-28-1} is already determined, and so the problem of finding a $W$ that is consistent with $s$ becomes the problem of finding a wiring diagram $W$ with a  restriction $W_U$ such that $\wtg(W_U)=(\wtg(W))_{\wt{U}}$ admits $(\stg (s))_{\widetilde{U}}$ as a flattening.  This means, that in practice, finding all wiring diagrams that are consistent with a given sequence depends on  solving the following problem:

\begin{prob}\label{prob:flattening2WDs}
Given a path graph $P$, find all wiring diagrams $W$ such that for some restriction $W_U$ of $W$, the graph $\wtg (W_U)$ admits $P$ as a flattening.
\end{prob}

\subparagraph \label{para:mainthmdiscussion-b} In particular, when all the vertex labels of $W$ already appear in the sequence $s$, i.e.\ when $\widetilde{V}$ is a subset of $\{ s_i\}_{1 \leq i \leq q}$, the map $\gamma$ is an equality.  In this case,  diagram \eqref{eq:ON06p2-28-1} simplifies to
\begin{equation}\label{eq:ON06p2-28-1b}
\xymatrix{
  \stg (s) \ar[r]^\beta & (\stg (s))_{\widetilde{U}} \ar@{.>}[d]^\delta \\
  &  \wtg (W)
}
\end{equation}
while Problem \ref{prob:flattening2WDs} specializes to

\begin{prob}\label{prob:flattening2WDs-2}
Given a path graph $P$, find all quasi-skeleton WD graphs $G$   that  admits $P$ as a flattening.
\end{prob}

Of course, solving Problem \ref{prob:flattening2WDs} involves a finite search, so the issue lies not in the existence of a solution, but in designing an efficient algorithm for the search.


\begin{eg}
Let $P$ denote the path graph 
\begin{equation*}
\xymatrix{
A \ar[r] & B \ar[r] & C\ar[r] & D
}
\end{equation*}
and let $G$ denote the quasi-skeleton WD graph 
\begin{equation*}
\xymatrix @R=-0.1pc {
& B\ar[dr] & \\
A \ar[ur] \ar[dr] & & D \\
& C \ar[ur]
}.
\end{equation*} 
As we saw in Example \ref{eg:flatteningsofgraph}, $P$ is a flattening of $G$.   There are other graphs that admit $P$ as a flattening, however; one such example  is
\[
\xymatrix @R=-0.1pc {
  & & C \\
  A \ar[r] & B \ar[ur] \ar[dr] & \\
  & &  D
}
\]
\end{eg}

\section{Algorithms for understanding data}\label{sec:algos}

We now present an algorithm that converts a sequence  to a matrix.  The sequence that forms the input of the algorithm represents data, while the matrix that forms the output will be - in an ideal situation -  the path matrix of a WD graph, which in turn represents an abstract concept.  Therefore, this algorithm can be seen as a way to extract a concept from data.


\begin{algo}\label{algo:1-v4}
Let  $X$ be a finite set, $S=(S_i)_{1 \leq i \leq q}$ a finite sequence of subsets of $X$, and $J = \{ e_1, \cdots, e_m\}$ a subset  of $X$.  This algorithm returns  an $m \times m$ matrix $M_S$ with entries in $\{0, 1\}$.  Below, we will write $M_S(i,j)$ to denote the $(i,j)$-entry of $M_S$.

\begin{algorithmic}[1]
\State For each $1 \leq i \leq m$, define a set of positive integers 
\[
P_i := \{ k \in \mathbb{Z}^+ : e_i \in S_k \}.
\]
\State For each $1 \leq i \leq m$, set $M_S(i,i)=0$.
\State For every $1 \leq i \neq j \leq m$:
\If{$P_i, P_j$ are both nonempty and $\max{P_i} < \min{P_j}$}
  \State set $M_S(i,j)=1$
\Else
  \State set $M_S(i,j)=0$
\EndIf
\end{algorithmic}
\end{algo}

The output of Algorithm \ref{algo:1-v4} is a matrix $M_S$ such that, for any $1 \leq i, j \leq m$, the entry $M_S(i,j)$ equals 1 if and only if 
\begin{itemize}
    \item both $e_i, e_j$ appear in the sequence at least once, and
    \item every term in $S$ containing $e_i$ appears strictly before every term containing $e_j$. 
\end{itemize} 



\paragraph \label{para:algo:1-v4-disc1} To  see the connection between Algorithm \ref{algo:1-v4}  and the theory in Section \ref{sec:consistWD}, let us  assume the setup of \ref{para:setup-flattenunivprop}.  We can  regard $s$ as a simple sequence of subsets of $X$ as explained in \ref{para:seqsingletonconv}, and then apply  Algorithm \ref{algo:1-v4} by taking the input $S$ to be $s$ and the input $J$ to be $\widetilde{V}$.  Let us label the elements of $J=\widetilde{V}$ as $e_1, \cdots, e_m$ as in Algorithm \ref{algo:1-v4}.  The algorithm will then  produce an $m \times m$ matrix $M_s$.

If we now write $G$ to denote the   disjoint union of the graph $(\stg(s))_{\widetilde{U}}$ and the vertices $\wt{V} \setminus \wt{U}$, then $M_s$ is precisely the adjacency matrix of the transitive closure $G^T$ (or the \emph{path matrix} of $G$), i.e.\ $M_s(i,j)=1$ if and only if there is a path from $e_i$ to $e_j$ in $G$.  Note that $G$ is a quasi-skeleton WD graph even though it may not be connected.


\paragraph \label{para:algo:1-v4-disc2} Another way to understand the connection between  Algorithm \ref{algo:1-v4}  and  Section \ref{sec:consistWD} is as follows.  Suppose 
\[
\widetilde{U} = \{ e_{i_1}, e_{i_2}, \cdots, e_{i_n} \},
\]
i.e.\ suppose $e_{i_1}, e_{i_2}, \cdots, e_{i_n}$ with $i_1 < \cdots < i_n$ are the vertex labels in $W$ that appear in the sequence $s$.  Then we can extract an $n \times n$ matrix $\widehat{M_s}$  by omitting the $k$-th row and $k$-column in $M_s$ where $k$ runs over every element in
\[
\{1, 2, \cdots, m\} \setminus \{i_1, \cdots, i_n\}.
\]
Note that the elements in this set are precisely the indices $k$ for which $e_k$ does not appear in the sequence $s$.  The matrix $\widehat{M_s}$ is then the path matrix  of  $(\stg(s))_{\widetilde{U}}$.

Informally, \emph{any wiring diagram with $e_{i_1}, \cdots, e_{i_n}$ as vertex labels that is consistent with $s$  represents a concept that is  more abstract than that corresponding to the matrix $\widehat{M_s}$}.



In the special case where all the vertex labels of the wiring diagram $W$ appear in the sequence $s$,  we have $J=\wt{V}=\wt{U} = \{e_1, \cdots, e_m\}$.  Then  $M_s = \widehat{M_s}$ itself is  the path matrix   of $(\stg(s))_{\wt{U}}$.

Note that in the discussions in \ref{para:algo:1-v4-disc1} and  \ref{para:algo:1-v4-disc2}, the assumption of $s$ being a simple sequence  is not essential to our being able to  interpret the output $M_s$ of  Algorithm \ref{algo:1-v4} as the path matrix of a graph.  This is because of the following simple lemma:

\begin{lem}\label{lem:algv1hastrans}
Let $X$ be a finite set,  $s$ a finite sequence in $X$, and $\{e_1,\cdots, e_m\}$ a subset of $X$.  Let $M_s$ denote the corresponding output of Algorithm \ref{algo:1-v4}.  Then for any pairwise distinct indices $1\leq i, j, k \leq m$ such that $M_s(i,j)=1=M_s(j,k)$, we have $M_s(i,k)=1$.
\end{lem}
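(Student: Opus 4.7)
The plan is to simply unpack the definition of the matrix entries given by Algorithm \ref{algo:1-v4} and chain the resulting inequalities.

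First I would recall that, by the algorithm, for any indices $a \neq b$ we have $M_s(a,b) = 1$ precisely when the sets $P_a, P_b$ are both nonempty and $\max P_a < \min P_b$. Applying this to the two hypotheses $M_s(i,j) = 1$ and $M_s(j,k) = 1$ (both with distinct indices), I obtain that $P_i, P_j, P_k$ are all nonempty, together with the two inequalities $\max P_i < \min P_j$ and $\max P_j < \min P_k$.

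Next I would use the trivial observation that, since $P_j$ is a nonempty set of positive integers, $\min P_j \leq \max P_j$. Chaining this with the two inequalities above yields
\[
  \max P_i < \min P_j \leq \max P_j < \min P_k,
\]
so in particular $\max P_i < \min P_k$. Since $P_i$ and $P_k$ are nonempty and $i \neq k$ (by hypothesis), the definition in Algorithm \ref{algo:1-v4} then forces $M_s(i,k) = 1$.

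There is no real obstacle here: the statement is a direct transitivity property of the relation ``every occurrence of $e_a$ precedes every occurrence of $e_b$'' on indices, and the only minor thing to verify is that $P_j$ being nonempty lets us sandwich $\min P_j$ and $\max P_j$ between the two strict inequalities. This is consistent with the interpretation given in \ref{para:algo:1-v4-disc1}, where $M_s$ is identified with the path matrix of a quasi-skeleton WD graph, a relation which is transitive by its very construction.
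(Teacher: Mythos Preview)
Your proof is correct and follows exactly the same approach as the paper: unpack the condition $M_s(a,b)=1$ into nonemptiness of $P_a,P_b$ and $\max P_a<\min P_b$, then chain the inequalities through $\min P_j\le\max P_j$ to conclude. There is nothing to add.
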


\begin{proof}
Using the notation in Algorithm \ref{algo:1-v4}, the assumption $M_s(i,j)=1=M_s(j,k)$ implies $P_i, P_j, P_k$ are all nonempty, and that 
\[
\max{P_i} < \min{P_j} \leq \max{P_j} < \min{P_k}.
\]
As a result, $\max{P_i}<\min{P_k}$ and hence $M_s(i,k)$ must be 1.
\end{proof}

We can slightly generalize Algorithm \ref{algo:1-v4} so that it takes in more than one sequence as input and produces a matrix as output:

\begin{algo}\label{algo:1-v4b}
Let  $X$ be a finite set, $S^{(1)}, \cdots, S^{(p)}$ finite sequences of  subsets of $X$, and $J = \{ e_1, \cdots, e_m\}$ a subset  of $X$.  For each $1 \leq j \leq p$, write $S^{(j)}_k$ to denote the $k$-term of the sequence $S^{(j)}$.  This algorithm returns  an $m \times m$ matrix $M$ with entries in $\{0, 1\}$.  Below, we will write $M(s,t)$ to denote the $(s,t)$-entry of $M$.

\begin{algorithmic}[1]
\State For each $1 \leq i \leq m$ and $1 \leq j \leq p$, define a set of positive integers
\[
P_i^{(j)} := \{ k \in \mathbb{Z}^+ : e_i \in S_k^{(j)} \}.
\]
\State For each $1 \leq s \leq m$, set $M(s,s)=0$.
\State For every $1 \leq s \neq t \leq m$:
\If{There is some $j$ such that $P_s^{(j)}, P_t^{(j)}$ are both nonempty and $\max{P_s^{(j)}} < \min{P_t^{(j)}}$, and that for every $k$ such that $P_s^{(k)}, P_t^{(k)}$ are both nonempty   we have $M_{S^{(k)}}(s,t)=1$ as computed by Algorithm \ref{algo:1-v4}}
  \State set $M(s,t)=1$
\Else
  \State set $M(s,t)=0$
\EndIf
\end{algorithmic}
\end{algo}

In later discussions, we will sometimes refer to the output $M$ of Algorithm \ref{algo:1-v4b} as the \emph{common matrix} of the sequences $S^{(1)}, \cdots, S^{(p)}$.

\paragraph[Hasse clustering] In many applications, one might need to  identify the ``common themes'' among the information contained in a collection of sequences.  In order to do this, we can first use  Algorithm \ref{algo:1-v4} to  convert each sequence to a matrix, which in turn  represents the underlying graph of a wiring diagram as explained in \ref{para:algo:1-v4-disc1} and \ref{para:algo:1-v4-disc2}.  We can then use the structure of the category $\mathcal{R}$ to extract the information common to all these individual wiring diagrams.  We will refer to this algorithm as \emph{Hasse clustering}, which we outline in a pseudo-code below.

\begin{algo}[Hasse clustering] \label{algo:3-v1}
Let $X$ be a finite set, $s^{(1)}, ..., s^{(p)}$ a collection of sequences in $X$, and $J=\{e_1, \cdots, e_m\}$ a subset of $X$.  For any real number $t \in [0,100]$ and any positive integer $r$, this algorithm returns a (possibly empty) finite set $\mathcal{C} = \{C_1, \cdots, C_l\}$ where
\begin{itemize}
    \item each $C_i$ is a nonempty set of size at most $r$;
    \item each element of each $C_i$ is an $m \times m$ matrix  with entries in $\{0,1\}$.
\end{itemize}
Below, we will sometimes abuse notation and write $\mathcal{R}(J)$ to denote the set of objects in that category.

\begin{algorithmic}[1]
\State For each $1 \leq i \leq p$, apply Algorithm \ref{algo:1-v4} to the sequence $s^{(i)}$; denote the matrix obtained as the  output  by  $M_i$. 

\State By the discussion in \ref{para:algo:1-v4-disc1} and Lemma \ref{lem:algv1hastrans}, each $M_i$ is the path  matrix of some graph with vertex set $J$; denote this graph by $G_i$. 

\State For every $H$ in $\mathcal{R}(J)$, define the set 
\begin{multline*}
a(H) := \{ G_i : 1 \leq i \leq p, \text{ there is a morphism}  \\
\text{from }G_i \text{ to } H \text{ in the category }\mathcal{R}(J) \}.
\end{multline*}

\State \label{line:definingbarC} Find all the subsets $\overline{C}$ of $\mathcal{R}(J)$ such that $\overline{C}$ has  at most $r$ elements and 
\[
 \frac{\left| \left\{ 1 \leq i \leq p : G_i \in \bigcup\limits_{H \in \overline{C}} a(H) \right\} \right|}{p} \geq t\%.
 \]

\State Construct a directed graph $\mathcal{G}_{cp}$ as follows: the vertices of $\mathcal{G}_{cp}$ are the $\overline{C}$'s found in line \ref{line:definingbarC}.  We define an arrow from vertex $\overline{C}'$ to vertex $\overline{C}''$ if, for every element $G'$ in $\overline{C}'$, there is  some element $G''$ in $\overline{C}''$ such that there exists a morphism from $G'$ to $G''$ in the category $\mathcal{R}(J)$.

\State Let $\overline{C}_1, \cdots, \overline{C}_l$ denote the vertices in $\mathcal{G}_{cp}$ such that, for each $1 \leq i \leq l$, there are no arrows going into $\overline{C}_i$ in the graph $\mathcal{G}_{cp}$.

\State \label{line:convbarCtoC} For each  $1 \leq i \leq l$,  define 
\begin{multline*}
C_i = \{ M: M \text{ is the path matrix of } 
G  \\
\text{ for some }G \in \overline{C}_i\}.
\end{multline*}

\State The output $\mathcal{C}$ of the algorithm is then
\[
\mathcal{C} := \{ C_1, \cdots, C_l\}.
\]
\end{algorithmic}
\end{algo}


In more intuitive terms, we can interpret Algorithm \ref{algo:3-v1} as follows: Given a \emph{threshold} $t$, the algorithm  produces all possible combinations $C$ of (the path matrices of) quasi-skeleton WD graphs such that:
\begin{itemize}
    \item[(i)] Each combination $C$ consists of at most $r$ quasi-skeleton WD graphs.
    \item[(ii)] For each combination $C$,  the quasi-skeleton WD graphs corresponding to  the elements of  $C$ represent the ``themes'' that appear in    at least $t\%$ of the sequences $s^{(i)}$.
    \item[(iii)] Each combination $C$ is ``optimal'' in the sense that it cannot be replaced by another combination $C'$ satisfying (i) and (ii), such that $C'$ contains strictly more information than $C$. 
\end{itemize}

\paragraph[Identifying relevant events]  As noted in \ref{para:catRVHasseDseq}, for a finite set $J$ of size $m$, the number of objects in the category $\mathcal{R}(J)$ is given by sequence A001035 in  the On-Line Encyclopedia of Integer Sequences .  The first 10 terms in this sequence are as shown \cite{OEISposet}:

\begin{center}
\begin{tabular}{ |c|l| } 
\hline
 $m$ & number of objects in $\mathcal{R}(J)$  \\ 
  \hline 
1 & 1\\
2 & 3\\
3 & 19\\
4 & 219\\
5 & 4231\\
6 & 130023\\
7 & 6129859\\
8 & 431723379\\
9 & 44511042511\\
10 & 6611065248783 \\
\hline
\end{tabular}
\end{center}

In our application below, we will analyze data using Algorithm \ref{algo:3-v1} on a personal computer with the following hardware specifications:  
\begin{itemize}
  \item CPU: Intel Core i7-9750H @ 2.60GHz (6 cores, 12 threads)
  \item GPU: NVIDIA GeForce RTX 2060 (6 GB VRAM)
  \item RAM: 16 GB DDR4
  \item Storage: 1 TB NVMe SSD
\end{itemize}
Constrained by the computer power we have access to, we can only apply Algorithm \ref{algo:3-v1} to our data with $m \leq 5$.  As a result, we need to perform basic data mining  to determine the events $e_i$ that are the most relevant before applying  Algorithm \ref{algo:3-v1}.  We focus on the scenario when we have a collection of sequences $s^{(i)}$ where each sequence has been  assigned  a label of either 1 or 0, and we need to find the common themes among those labeled with 1, assuming these themes differentiate those labeled with 1 from those labeled with 0.

\begin{algo} \label{algo:5-v1} Let $X=\{e_1, \cdots, e_n\}$ be a finite set. 
Suppose 
$s^{(1)}, ..., s^{(r)}$ are  sequences in $X$ such that each sequence $s^{(k)}$ has been assigned a label $l(k)$ that equals $1$ or $0$.  This algorithm produces, for each pair $(i,j)$ where $1 \leq i \neq j \leq n$, a score $R_{ij} \in \mathbb{R} \cup \{\infty\}$  that indicates the relevance of the relation `$e_i$ occurs before $e_j$' to a sequence $s^{(k)}$ having label $l(k)$ equal to 1 instead of 0.

We assume that there is at least one sequence $s^{(k)}$ with $l(k)=1$ and at least one sequence $s^{(k')}$ with $l(k')=0$.

\begin{algorithmic}[1]
\State Define positive integers
\begin{itemize}
    \item   $N_W$, the total number of sequences $s^{(k)}$ with $l(k)=1$.
    \item $N_L$, the total number of sequences $s^{(k)}$ with $l(k)=0$.
\end{itemize}

\State For each ordered pair $(i,j)$ where $1 \leq i \neq j \leq n$, define the  integers
\begin{itemize}
    \item $W_{ij}$, the number of sequences $s^{(k)}$ with  $l(k)=1$ and such that  every instance of $e_i$ occurs before every instance of $e_j$.
    \item $L_{ij}$, the number of sequences $s^{(k)}$ with  $l(k)=0$ and such that  every instance of $e_i$ occurs before every instance of $e_j$.
\end{itemize}
and subsequently 
\[
R_{ij} = \begin{cases} \frac{W_{ij}/N_W}{L_{ij}/N_L} &\text{ if $L_{ij} \neq 0$} \\
\infty &\text{ if $L_{ij}=0$} 
\end{cases}
\]

\end{algorithmic}
\end{algo}

The intuition behind Algorithm \ref{algo:5-v1} is, that the higher the score $R_{ij}$, the more likely it is for $e_i$ to always appear before $e_j$ in a sequence $s^{(k)}$ attaining $l(k)=1$.

\section{Application: summarizing behaviors - part 1}\label{sec:application-1}

\paragraph[The problem] Schank and Abelson proposed in the 1970s   that humans  store their knowledge of familiar concepts such as ``going to a restaurant'' or ``riding a bus'' as a \emph{script}, which they defined as ``...a structure that describes appropriate sequences of events in a particular context'', and that children formulate scripts by experiencing multiple instances of the same concept  \cite[Sections 3.1 \& 9.1]{schank1977scripts}.  This theory found further evidence in the work of Nelson and Gruendel, who proposed a more general concept called \emph{general event representation (GER)}, and postulated that GERs provide basic building blocks of cognitive organization  \cite{nelson2013generalized}.  

In this section, we tackle the following problem: If a child can learn what it means to ``buy groceries'' by accompanying their parents to the supermarket on multiple occasions, how can we enable an autonomous system to do the same?  In more formal terms:

\begin{prob}
Given a collection of data that represents multiple observations of the occurrence of a concept, how do we compute a wiring diagram that represents the commonality among the data?
\end{prob}

\paragraph[The data]\label{sec:data-1} To generate data that simulate observations of multiple occurrences of the same concept, we created a simple computer game where the player needs to overcome  obstacles within a maze in order to win the game.  Specifically, the player needs to collect an explosive and a key placed at different locations within the maze, then use the explosive to remove a rock blocking a route leading to a door; the player then wins the game by using the key to open the door.  

\begin{center}
\includegraphics[scale=0.26]{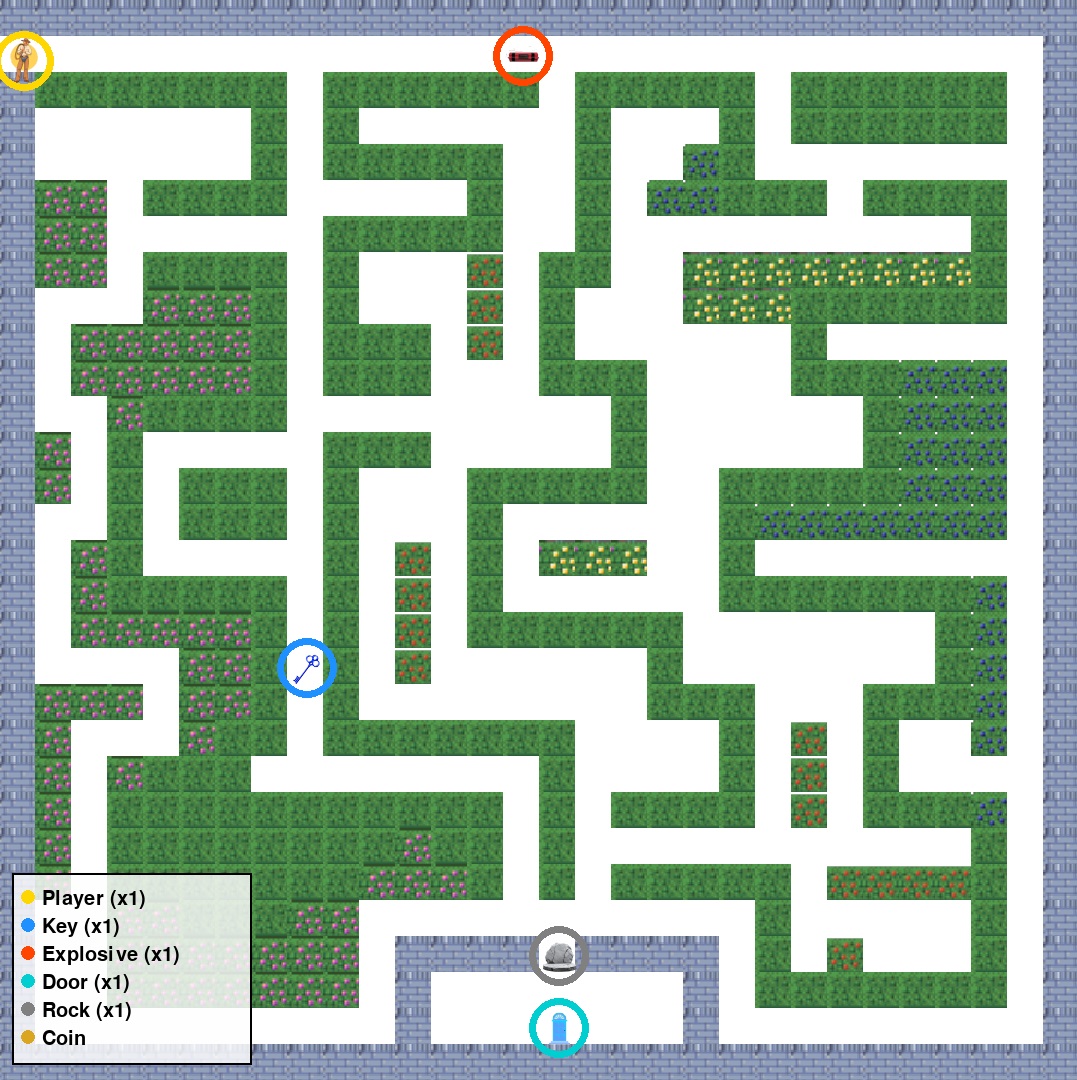}
\end{center}


The collection of all the moves that can possibly be performed by the player is  represented in an olog.  Events of interest, particularly interactions between the player and its environment, are denoted as various $e_i$ as shown  in the table below.

\begin{center}
\begin{tabular}{ |c|l| } 
\hline
 event & corresponding move  \\ 
  \hline 
  $e_1$ & player collects key  \\ 
 $e_2$ &  player collects explosive \\ 
 $e_3$ & key not collected by end of episode  \\ 
 $e_4$ & explosive not collected by end of episode  \\ 
 $e_5$ & player uses explosive on rock  \\ 
 $e_6$ & player uses key on door  \\ 
 $e_7$ & failed use of explosive by player  \\ 
 $e_8$ & failed use of key by player  \\ 
 $e_9$ & player wins episode  \\ 
 $e_{10}$ & player does not win episode  \\ 
 \hline
\end{tabular}
\end{center}

We then used a proximal policy optimization (PPO) algorithm to train a computer agent through reinforcement learning.  The training run produced a total of 284 episodes of the agent playing the game; out of these, the agent won 80 episodes (referred to as \emph{winning episodes}) while failed to complete  the remaining 204 episodes (referred to as \emph{losing episodes}).  For each of these episodes, all the moves performed by the agent were recorded as a sequence.  The sequence was then processed by removing all the data that does not correspond to any of the events $e_1, \cdots, e_{10}$, and then rewritten using the symbols $e_i$.  This resulted in 284 sequences  in the set $X:=\{e_i : 1 \leq i \leq 10\}$.  We denote the sequences as $s^{(k)}$ ($1 \leq k \leq 284$) where $s^{(k)}$ corresponds to a winning episode (resp.\ losing episode) for each $1\leq k \leq 80$ (resp.\ $81 \leq k \leq 284$).





\paragraph[Results \& Analysis] \label{sec:gamev2-resultsanalysis}  To identify the events $e_i$ that are the most relevant to  the agent winning the game, we apply Algorithm \ref{algo:5-v1} to the set $X$ and the sequences $s^{(k)}$ from the previous paragraph together with the function 
\[
l(k) := \begin{cases}
1 &\text{ if $s^{(k)}$ is  from a winning episode} \\
0 &\text{ if $s^{(k)}$ is from a losing episode}
\end{cases}.
\]
The highest value of $R_{ij}$ produced by Algorithm \ref{algo:5-v1} was $\infty$, achieved by the following $(e_i, e_j)$ pairs: 

\begin{gather*}
    (e_1, e_9), (e_1, e_6), (e_6, e_9), (e_8, e_6), \\
    (e_8, e_9), (e_5, e_6), (e_2, e_9), (e_2, e_6), (e_5, e_9)
\end{gather*}
This suggests that the events that are the most relevant to the agent winning the game are 
\[
e_1, e_2, e_5, e_6, e_8, e_9.  
\]
Note that there are 6 of these $e_i$'s  out of 10 possible $e_i$'s, showing that Algorithm \ref{algo:5-v1} helps somewhat in identifying the events relevant to winning the game.


Next, we applied Algorithm \ref{algo:3-v1} to extract information in the form of wiring diagrams from the winning episodes.  Since $e_9$ must appear in any winning episode, we  did not consider it when applying Algorithm \ref{algo:3-v1}.  We also removed  $e_8$ from consideration, since it represents an action by the agent that does not result in any changes to its environment (a ``non-event'').  That is, we applied Algorithm \ref{algo:3-v1} to the set $X$ and the sequences $s^{(k)}$ for $1 \leq k \leq 80$, with the set $J=\{ e_1, e_2, e_5, e_6\}$ and parameters $t=100, r=1$.  We note that the sequences from the winning episodes are all simple sequences in $X$.  

Intuitively,  Algorithm \ref{algo:3-v1} looked for wiring diagrams that were consistent with all the sequences $s^{(k)}$, $1 \leq k \leq 80$ from the winning episodes.  During this process, the objects of the category $\mathcal{R}(J)$ can be visualized as a `heat map' (see Figure \ref{fig:heatmap-1}).

\begin{figure*}[t]
  \centering
  \includegraphics[width=\textwidth]{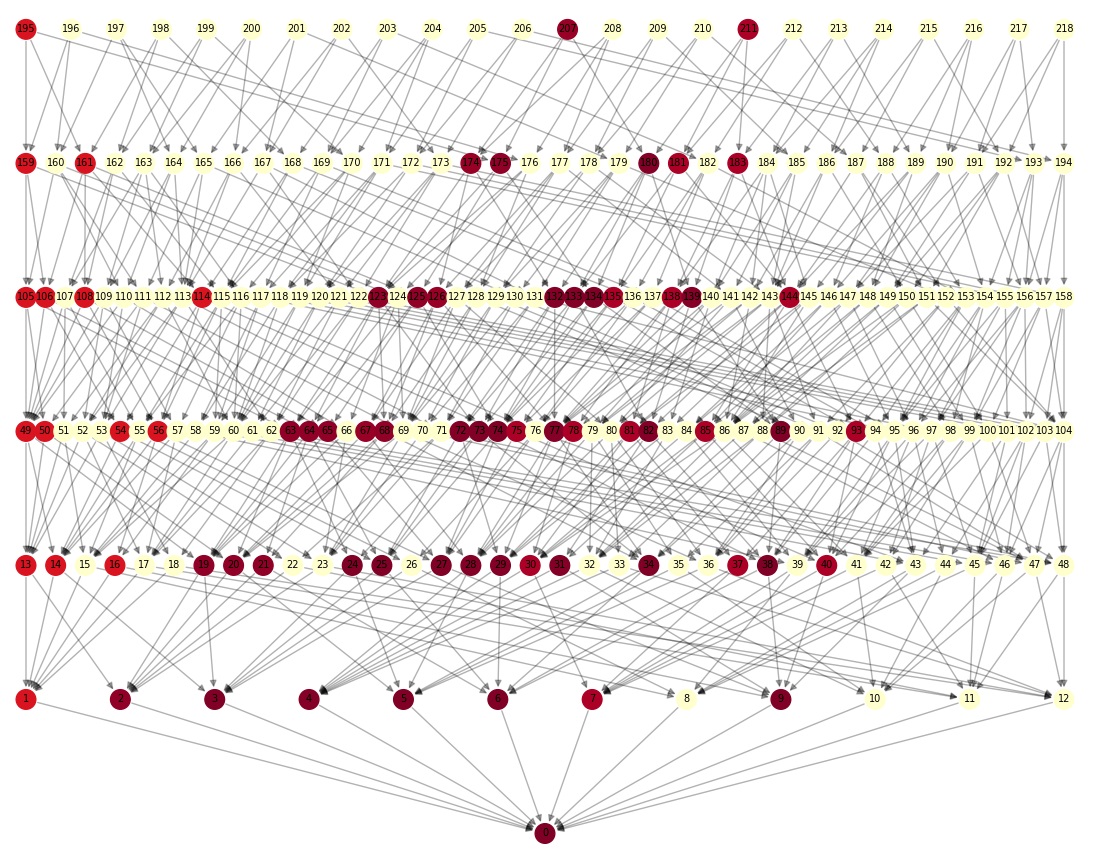}
  \caption{The category $\mathcal{R}(J)$ where $J$ has size 4.  Each node corresponds to a wiring diagram graph.  The higher the color intensity of a node $H$, the higher the value of $a(H)$ in Algorithm \ref{algo:3-v1}.}
  \label{fig:heatmap-1}
\end{figure*}

Algorithm \ref{algo:3-v1} returned an output $\mathcal{C}$ with a single element $C$, where $C$ itself is a singleton set containing the matrix
\begin{equation}\label{eq:output-matrix-1}
\begin{pmatrix}
    0 & 0 & 0 & 1 \\
    0 & 0 & 1 & 1 \\
    0 & 0 & 0 & 1 \\
    0 & 0 & 0 & 0 
\end{pmatrix}
\end{equation}
This matrix is  the path matrix for the graph
\begin{equation}\label{eq:graph-01}
\includegraphics[scale=0.5]{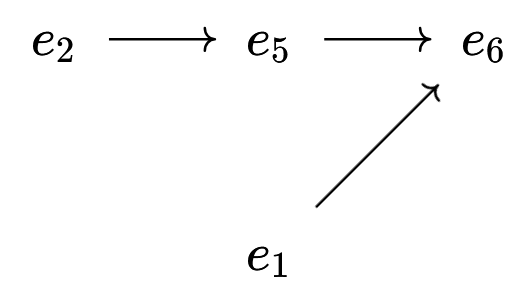}
\end{equation}
which in turn is the result of applying the operation $\wtg$ to the wiring diagram
\begin{equation}\label{eq:WD-01}
\includegraphics[scale=0.5]{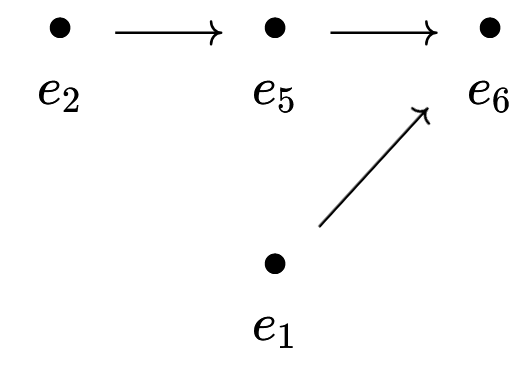}
\end{equation}
The graph \eqref{eq:graph-01} corresponds to node 134 in Figure \ref{fig:heatmap-1}.

The wiring diagram \eqref{eq:WD-01} (or the graph \eqref{eq:graph-01}, or the matrix \eqref{eq:output-matrix-1}) says that based on  the winning episodes, the complete list of before-and-after relations among pairs of elements from $\{e_1, e_2, e_5, e_6\}$ is
\begin{itemize}
    \item $e_1$ always appears before $e_6$.
    \item $e_2$ always appears before $e_5$ as well as $e_6$.
    \item $e_5$ always appears before $e_6$.
\end{itemize}
In plain language, this means that the following actions are always  performed within a winning episode: 
\begin{itemize}
    \item The agent  collects the explosive at some point, and then uses it to destroy the rock, before using the key on the door.  Also, the agent collects the key before using it on the door.
\end{itemize}
These  actions and their order of occurrence,  represented by  the wiring diagram \eqref{eq:WD-01}, \emph{summarize the unique winning strategy for the game}.

\paragraph[Flattenings] In this version of the game,  every   sequence $m_J(s^{(k)})$ (for $1 \leq k \leq 80$) maps to one of the following three possibilities under $\stg$:
\begin{align}
    &\xymatrix{
     e_1 \ar[r] & e_2 \ar[r] & e_5 \ar[r] & e_6 
     } \notag \\
     &\xymatrix{
     e_2 \ar[r] & e_1 \ar[r] & e_5 \ar[r] & e_6 
     } \notag \\
      &\xymatrix{
     e_2 \ar[r] & e_5 \ar[r] & e_1 \ar[r] & e_6 
     } \label{eq:v1-flattenings}
\end{align}
Note that these are precisely the three possible flattenings of the graph \eqref{eq:graph-01}.  Moreover, each of them represents a linear sequence of actions that provides a method for winning the game.  This is an example of Theorem \ref{thm:ON-MR06p2-20-1} parts (i) and (v) in action - the wiring diagram \eqref{eq:WD-01} is, by construction, consistent with each of the sequences  $m_J(s^{(k)})$,  $1 \leq k \leq 80$, and there is a morphism from  \eqref{eq:graph-01} to each of the graphs in \eqref{eq:v1-flattenings}.





\section{Application: summarizing behaviors - part 2}\label{sec:application-2}

\paragraph[The problem] In Section \ref{sec:application-1}, we applied our algorithms to the moves of a player  in a game, and  extracted a wiring diagram that represented the unique winning strategy.  In this section, we modify the game so that there is more than one  winning strategy.   This case  is a proxy for a scenario where multiple themes occur within a set of data, and we would like to identify these distinct themes within the data in the form of wiring diagrams.  (An example of such data could be the clinical data for a group of patients who have either the common flu or COVID - these two respiratory illnesses share similar symptoms, but the orders of occurrence of the symptoms are statistically different \cite{larsen2020modeling}.)

\paragraph[The data] \label{para:gamev3-data} To generate data that simulate observations that contain distinct themes, we modify the game described in Section \ref{sec:data-1} so that there are two different ways for a player to win the game: the player could win by  opening the door as in the earlier version of the game, or by collecting a coin within the maze.   In this version, however, there is a rock obstructing the coin in addition to one obstructing the door.

\begin{center}
\includegraphics[scale=0.25]{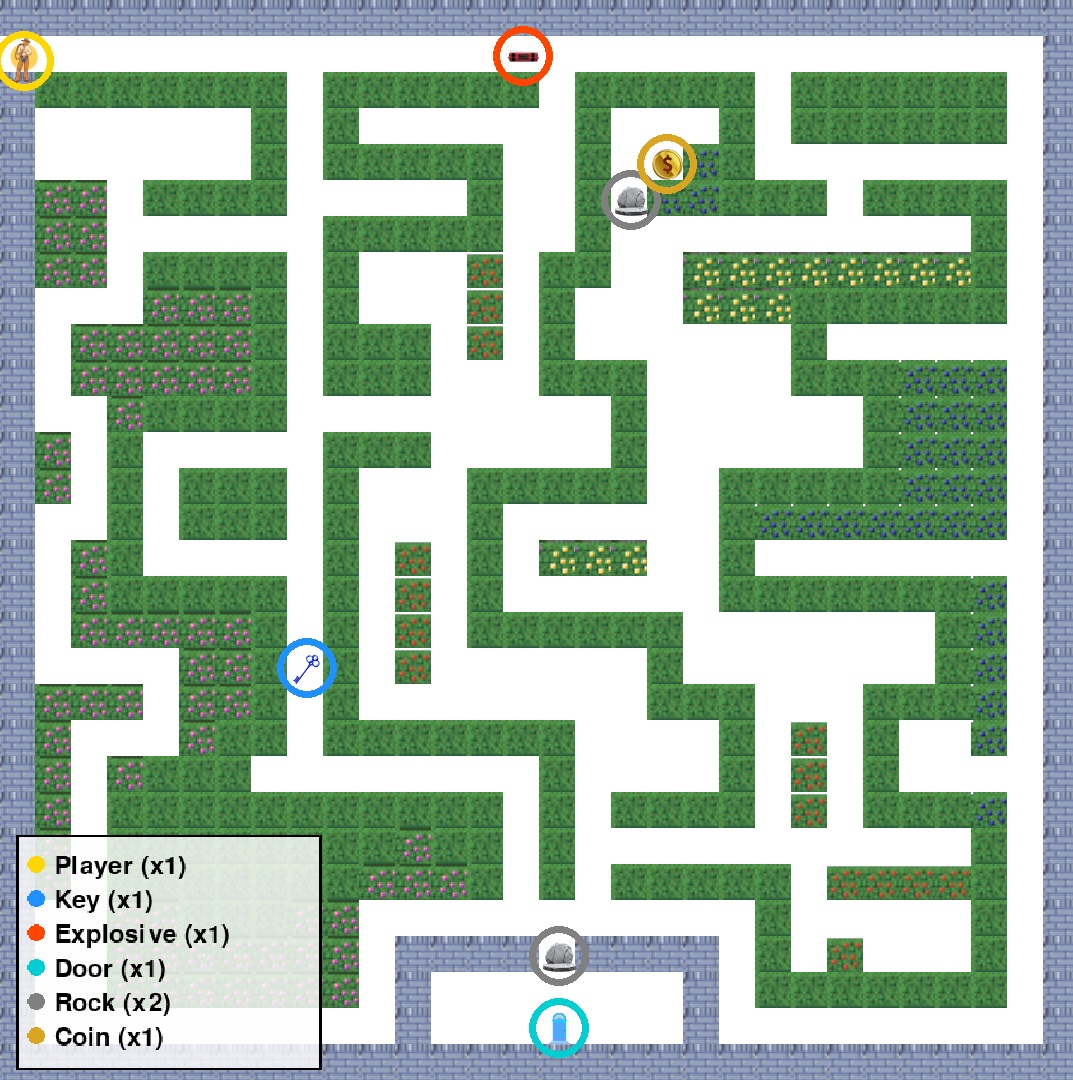}
\end{center}



We again trained a computer agent through reinforcement learning, producing 310 episodes out of which 125 episodes were winning (74 through opening the door, and 51 through collecting the coin) and 185 episodes were losing.  All the moves of the agent were recorded and processed similarly to  the previous version of the game described in Section \ref{sec:data-1}; the only difference was that the definition of  $e_9$ was reworded for clarity and $e_{11}, e_{12}$ were added:

\begin{center}
\begin{tabular}{ |c|l| } 
\hline
 event & corresponding move  \\ 
  \hline 
  $e_1, \cdots, e_8, e_{10}$  & same as in Section  \ref{sec:data-1}   \\ 
 $e_9$ & player wins episode by opening door  \\ 
  $e_{11}$ & player collects coin  \\ 
  $e_{12}$ & player wins by collecting coin \\
 \hline
\end{tabular}
\end{center}

In the end, we obtained  310 sequences $s^{(k)}$ ($1 \leq k \leq 310$) in the set $X:=\{e_i : 1 \leq i \leq 11\}$ where $s^{(k)}$ corresponds to a winning episode (resp.\ losing episode) for $1\leq k \leq 125$ (resp.\ $126 \leq k \leq 310$).  

\paragraph[Results \& Analysis] \label{para:gamev3-results} Using the same function $l$ as in Section \ref{sec:gamev2-resultsanalysis}, we applied Algorithm \ref{algo:5-v1} to the set $X$ and the sequences $s^{(k)}$.  The highest value of $R_{ij}$ produced was $\infty$, achieved by the following pairs $(e_i, e_j)$:

\begin{gather*}
    (e_1, e_{11}), (e_1, e_{12}), (e_1, e_9), (e_1, e_6), (e_2, e_{11}), (e_{11}, e_{12}), \\
    (e_2, e_{12}), (e_5, e_{11}), (e_5, e_{12}), (e_5, e_{6}), (e_3, e_{12}), (e_3, e_{11}), \\
    (e_2, e_9), (e_2, e_{6}), (e_5, e_9), (e_8, e_9), (e_8, e_{12}), (e_8, e_{11}), \\
    (e_8, e_6), (e_6, e_9).
\end{gather*}

This suggests that the events that are the most relevant to the agent winning  are 
\begin{equation}\label{eqn:gamev2-eilistprelim}
e_1, e_2, e_3, e_5, e_6, e_8, e_9,  e_{11}, e_{12}.
\end{equation}
Since there are 9 of these $e_i$'s out of the 12 possible $e_i$'s, Algorithm \ref{algo:5-v1} is less effective in identifying events relevant to winning the game compared to the scenario in Section \ref{sec:application-1}.

Since the  compute power we have access to only allows us to run Algorithm \ref{algo:3-v1} for $m \leq 5$, we need to reduce the list of relevant $e_i$ from \eqref{eqn:gamev2-eilistprelim} to a subset of size 5 or less.  Similar to what we did  in Section \ref{sec:gamev2-resultsanalysis}, we omitted $e_9$ and $e_{12}$ because they always follow $e_6, e_{11}$, respectively.  We also omitted $e_3, e_8$ since they do not represent changes in the game environment.  As a result, we focused on the set of events $J=\{ e_1, e_2, e_5, e_6, e_{11}\}$.  We then applied Algorithm \ref{algo:3-v1}  to $X$, $J$, and $s^{(k)}$ for $1 \leq k \leq 125$, using the parameters $t=100$ and $r=2$.  That is, we looked for combinations  of wiring diagrams of size at most 2, such that every wiring diagram in $C$ is consistent with all the sequences $s^{(k)}$ from winning episodes.  As in the previous section, the sequences from the winning episodes were all simple sequences in $X$.

Algorithm \ref{algo:3-v1} returned a set $\mathcal{C}$ with a single element $C$, which itself has two elements:
\begin{gather}
    \begin{pmatrix}
0 & 0 & 0 & 0 & 0 \\
0 & 0 & 1 & 0 & 1 \\
0 & 0 & 0 & 0 & 1 \\
0 & 0 & 0 & 0 & 0 \\
0 & 0 & 0 & 0 & 0
\end{pmatrix} \label{eq:gamev3-mtx1}  \\
\begin{pmatrix}
0 & 0 & 0 & 1 & 0 \\
0 & 0 & 1 & 1 & 0 \\
0 & 0 & 0 & 1 & 0 \\
0 & 0 & 0 & 0 & 0 \\
0 & 0 & 0 & 0 & 0
\end{pmatrix}. \label{eq:gamev3-mtx2}
\end{gather}
These are the path matrices for the following respective graphs (the corresponding wiring diagrams are clear):
\begin{gather}
\includegraphics[scale=0.58]{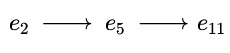}\label{eq:gamev3-WD-1}\\
\includegraphics[scale=0.5]{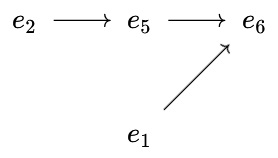}\label{eq:gamev3-WD-2}
\end{gather}
This means that every winning episode is consistent with either of these two strategies for the game:
\begin{itemize}
    \item Strategy 1 (represented by \eqref{eq:gamev3-WD-1}): The player collects the explosive, then uses it to destroy a rock (necessarily the one obstructing the coin), and then collects the coin.
    \item Strategy 2 (represented by \eqref{eq:gamev3-WD-2}): The player collects the explosive, then uses it to destroy a rock (necessarily the one obstructing the door), and then uses the key on the door.  Also, the player  collects the key  before using it on the door.
\end{itemize}
It is easy to see that  these are indeed \emph{the} two winning strategies for the game.

\paragraph[Omitting extraneous information]\label{para:gamev3-extra} In this version of the game,  every   sequence $m_J(s^{(k)})$ (for $1 \leq k \leq 125$) maps to one of the following seven possibilities under  $\stg$:
\begin{itemize}
    \item the graph \eqref{eq:gamev3-WD-1};
    \item the three flattenings of \eqref{eq:gamev3-WD-2};
    \item the graphs 
    \begin{align}
        &\xymatrix{
        e_1 \ar[r] & e_2 \ar[r] & e_5 \ar[r] & e_{11} 
        } \label{eq:gamev3-WD-extra-1}\\
        &\xymatrix{
        e_2 \ar[r] & e_1 \ar[r] & e_5 \ar[r] & e_{11} 
        } \label{eq:gamev3-WD-extra-2} \\
            &\xymatrix{
        e_2 \ar[r] & e_5 \ar[r] & e_1 \ar[r] & e_{11} 
        } \label{eq:gamev3-WD-extra-3}
    \end{align}
\end{itemize}
Each of these seven graphs represents a linear sequence of actions that ensures the player would win the game.  Note, however, that performing the action $e_1$ is not necessary for winning the game when following the recipes in  \eqref{eq:gamev3-WD-extra-1}, \eqref{eq:gamev3-WD-extra-2}, and \eqref{eq:gamev3-WD-extra-3}.  There are morphisms from each of these three graphs to \eqref{eq:gamev3-WD-1} in the category $\mathcal{R}$, and we can informally think of  Algorithm \ref{algo:3-v1} as having the ability to ``remove'' the  extraneous step  $e_1$  from \eqref{eq:gamev3-WD-extra-1}, \eqref{eq:gamev3-WD-extra-2}, and \eqref{eq:gamev3-WD-extra-3}.

\section{Comparison with standard clustering algorithms}\label{sec:comparison}

In this section, we study the use of two standard clustering algorithms - DBSCAN and (agglomerative) hierarchical clustering - in conjunction with Algorithm \ref{algo:1-v4b}.  We applied these methods to  the data from our autonomous agent playing the game.  Since data from winning episodes in version one of the game only gave three possible sequences $m_J(s^{(k)})$ (see Section \ref{sec:application-1}) whereas version two gave seven possible sequences $m_J(s^{(k)})$ (see Section \ref{sec:application-2}), we discuss only the more interesting case of version two.  Each of the two standard clustering algorithms sorts the 125 winning episodes into clusters, and Algorithm \ref{algo:1-v4b} converts each cluster to the path matrix of a wiring diagram graph.


\paragraph[DBSCAN plus Algorithm \ref{algo:1-v4b}]\label{sec:comparison-DBSCAN} In this method, we applied Algorithm \ref{algo:1-v4} to each of the sequences $s^{(k)}$, $1 \leq k \leq 125$ with the same inputs $X$ and $J$ as in Section  \ref{sec:application-2}.  Let $M_k$ denote the resulting matrix from the sequence $s^{(k)}$.  We then applied DBSCAN to the set of matrices $\{M_k\}_{1 \leq k \leq 125}$ using the $L^1$-norm, \texttt{min\_samples}$=1$, and various values of $\epsilon$.  The number of clusters produced is shown below:

\begin{center}
\begin{tabular}{ |c|c| } 
\hline
 $\epsilon$ & number of clusters  \\ 
  \hline 
   0 & 7  \\ 
 1 &  7 \\ 
   2  & 3  \\ 
   3   & 2  \\ 
   4   &  2 \\ 
   5   &  2 \\ 
    $\geq 6$   &  1  \\ 
 \hline
\end{tabular}
\end{center}
For each cluster obtained using DBSCAN, we then computed the common matrix of all the sequences in the cluster using  Algorithm \ref{algo:1-v4b}. 

When $\epsilon =2$, we obtain three clusters with sizes 74, 40, and 11, with common matrices \eqref{eq:gamev3-mtx2}, \eqref{eq:gamev3-mtx1}, and 
\begin{equation}\label{eq:gamev3-DBSCAN-extracluster1}
    \begin{pmatrix}
    0 & 0 & 0 & 0 & 1 \\
    0 & 0 & 1 & 0 & 1 \\
    0 & 0 & 0 & 0 & 1 \\
    0 & 0 & 0 & 0 & 0 \\
    0 & 0 & 0 & 0 & 0 
    \end{pmatrix},
\end{equation}
respectively.  When  $\epsilon = 3, 4, 5$, the common  matrices of the two clusters are always \eqref{eq:gamev3-mtx2} and \eqref{eq:gamev3-DBSCAN-extracluster1}.  Here, \eqref{eq:gamev3-DBSCAN-extracluster1} is the path matrix  of   
\begin{equation} \label{eq:gamev3-WD-extra-common}
    \xymatrix{
    e_2 \ar[r] & e_5 \ar[r] & e_{11} \\
    & e_1 \ar[ur] &
    }
\end{equation}
We can certainly think of \eqref{eq:gamev3-WD-extra-common} as a  ``summary'' of \eqref{eq:gamev3-WD-extra-1}, \eqref{eq:gamev3-WD-extra-2}, and \eqref{eq:gamev3-WD-extra-3}; however, as we saw in \ref{para:gamev3-extra}, these are not the only winning strategies.

The above results show, that even though DBSCAN produced wiring diagrams that correspond to  strategies sufficient for  winning  the game, these strategies may not be the `necessary' conditions for winning, and may not account for all the possible winning strategies.



\paragraph[Hierarchical clustering plus Algorithm \ref{algo:1-v4b}] \label{sec:comparison-hier}     We computed matrices $M_k$, $1 \leq k \leq 125$ in the same manner as in Section \ref{sec:comparison-DBSCAN}.  We then applied agglomerative hierarchical clustering to these matrices using the $L^1$-norm and  the average linkage criterion.  The resulting dendrogram  is shown in Figure \ref{fig:hierclustering}, and  the  number of clusters obtained for varying distance thresholds are shown below:

\begin{figure*}[t]
  \centering
  \includegraphics[width=\textwidth]{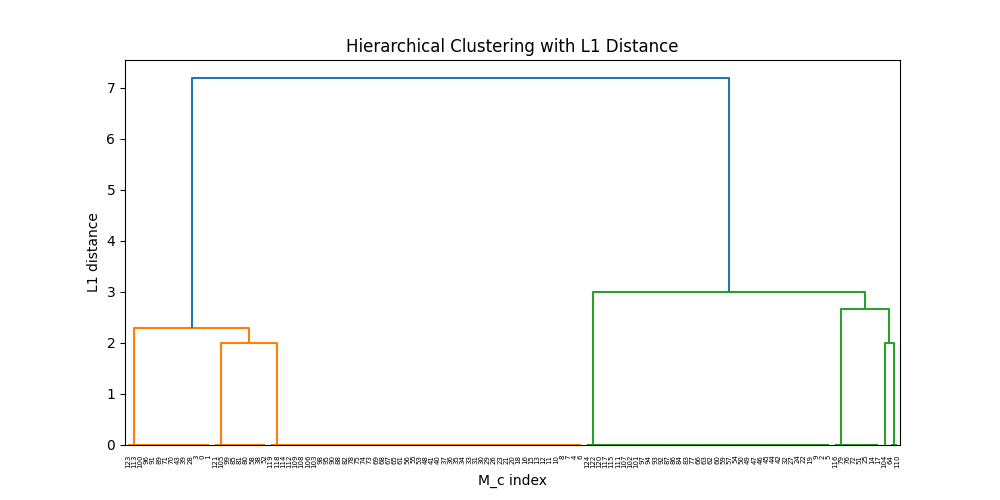}
  \caption{Dendrogram for hierarchical clustering with $L^1$-norm, applied to winning episodes of version two of the game.}
  \label{fig:hierclustering}
\end{figure*}

\begin{center}
\begin{tabular}{ |c|c| } 
\hline
 threshold & number of clusters  \\ 
  \hline 
   0 & 7  \\ 
 1 &  7 \\ 
   2  & 5  \\ 
   3   & 2  \\ 
   4   &  2 \\ 
   5   &  2 \\ 
6    &  2  \\ 
7 &  2  \\ 
$\geq 8$    &  1  \\ 
 \hline
\end{tabular}
\end{center}

For distance threshold equal to $3, 4, \cdots, 7$, the common matrices of the two clusters are path matrices of \eqref{eq:gamev3-WD-2} and \eqref{eq:gamev3-WD-extra-common}.  That is, as in the case of `DBSCAN plus plus Algorithm \ref{algo:1-v4b}', `agglomerative hierarchical clustering plus  Algorithm \ref{algo:1-v4b}' identifies sufficient, but not necessary conditions for winning the game.


\section{Comparison with standard clustering algorithms - with corrupted data}\label{sec:comparison-corruption}

When humans try to formulate an abstract concept out of everyday experiences, the `data' from these experiences can  contain imperfections due to miscommunication or distortion of memory.  In this section, we consider how corruption of data may affect the performances of:
\begin{itemize}
    \item Hasse clustering (Algorithm \ref{algo:3-v1});
    \item DBSCAN plus Algorithm \ref{algo:1-v4b};
    \item agglomerative Hierarchical clustering plus Algorithm \ref{algo:1-v4b}.
\end{itemize}




To test the robustness of the above three schemes of extracting wiring diagrams from data, we corrupted 10\% of the 125 sequences $s^{(k)}$ ($1 \leq k \leq 125$) from winning episodes in the second version of the game (see Section \ref{para:gamev3-data}).  The methods of corruption include: swapping terms in a sequence, deleting terms in a sequence, or inserting new terms in a sequence.  

We then applied Algorithm \ref{algo:1-v4} to the same $X$ and $J$ as in Section \ref{sec:application-2} and the sequences $s^{(k)}, 1 \leq k \leq 125$.  This resulted in 1 zero matrix, and 124 nonzero matrices; we retain the 124 sequences that yield nonzero matrices and relabel them as  $s^{(k)}, 1 \leq k \leq 124$. 





\paragraph[Hasse clustering] We applied Hasse clustering to  $X, J$, and $s^{(k)}, 1 \leq k \leq 124$ as above, with parameters $t=90$ and $r=2$.   We obtained  the same output as in Section \ref{para:gamev3-results}, namely one set containing two matrices \eqref{eq:gamev3-mtx1} and \eqref{eq:gamev3-mtx2} which correspond to the two distinct winning strategies for the game.

\paragraph[DBSCAN plus Algorithm \ref{algo:1-v4b}]  \label{sec:robustness-DBSCAN} We proceeded as in  \ref{sec:comparison-DBSCAN}: First, we applied Algorithm \ref{algo:1-v4} to each $s^{(k)}, 1 \leq k \leq 124$ with $X, J$ as above; let $M_k$ denote the matrix produced by the algorithm.  Then, we applied DBSCAN to the set of matrices $\{M_k\}_{1 \leq k \leq 124}$ using the $L^1$-norm and  \texttt{min\_samples}$=1$.


For different values of $\epsilon$, the  number of clusters obtained and the cluster sizes are shown below.

\begin{center}
\begin{tabular}{ |c|c|p{4.1cm}| } 
\hline
 $\epsilon$ & number of clusters & cluster sizes \\ 
  \hline 
   1 & 17 & 12, 37, 2, 46, 6, 1, 1, 9, 1, 1, 2, 1, 1, 1, 1, 1, 1 \\ 
 2 &  8 & 68, 42, 9, 1, 1, 1, 1, 1\\ 
   3  & 2 & 123, 1 \\ 
   $\geq 4$   & 1 & 124 \\ 
 \hline
\end{tabular}
\end{center}

In the case when two clusters are produced, i.e.\ when $\epsilon =3$, the common matrices are:
\begin{equation}\label{eq:robust-v3-DBSCAN-epis3-1}
    \begin{pmatrix}
        0 & 0 & 0 & 0 & 1 \\
        0 & 0 & 0 & 0 & 0 \\
         0 & 0 & 0 & 0 & 0 \\
          0 & 0 & 0 & 0 & 0 \\
          0 & 0 & 0 & 1 & 0 
    \end{pmatrix}
\end{equation}
for the cluster of size 123, and 
\begin{equation}\label{eq:robust-v3-DBSCAN-epis3-2}
    \begin{pmatrix}
        0 & 0 & 1 & 0 & 0 \\
        0 & 0 & 0 & 0 & 0 \\
         0 & 0 & 0 & 0 & 0 \\
          0 & 0 & 0 & 0 & 0 \\
          1 & 0 & 1 & 0 & 0 
    \end{pmatrix}
\end{equation}
for the cluster of size 1.  Since the latter cluster has size 1, its common matrix is the same as the unique matrix in that cluster.

Note that \eqref{eq:robust-v3-DBSCAN-epis3-1} is \emph{not the path matrix of any wiring diagram graph}; in particular, it is the adjacency matrix of the linear graph
\begin{equation}\label{eq:robust-gamev3-graph2}
\xymatrix{
e_1 \ar[r] & e_{11} \ar[r] & e_6
}
\end{equation}
but not its path matrix.  On the other hand, \eqref{eq:robust-v3-DBSCAN-epis3-2} is the path matrix of the graph
\begin{equation}\label{eq:robust-gamev3-graph1}
\xymatrix{
e_{11} \ar[r] & e_{1} \ar[r] & e_5
}
\end{equation}
Neither \eqref{eq:robust-gamev3-graph1} nor \eqref{eq:robust-gamev3-graph2}, however, represents a sufficient strategy for winning the game.

\paragraph[Hierarchical clustering plus Algorithm \ref{algo:1-v4b}]   We proceeded as in  \ref{sec:comparison-hier}: First, we computed matrices $M_k$ via Algorithm \ref{algo:1-v4} using the same $X, J, s^{(k)}$ as in \ref{sec:robustness-DBSCAN}.  Then, we   applied agglomerative hierarchical clustering to these matrices using the $L^1$-norm and  the average linkage criterion.  For different thresholds, the  number of clusters obtained and the cluster sizes are shown below.

\begin{center}
\begin{tabular}{ |c|c|p{3.2cm}| } 
\hline
 threshold & number of clusters & cluster sizes \\ 
  \hline 
   0, 1 & 17 & 12, 1, 9, 46, 1, 1, 1, 1, 2, 6, 37, 1, 2, 1, 1, 1, 1 \\ 
 2 &  12 & 13, 55, 1, 1, 1, 1, 8, 37, 1, 3, 2, 1 \\ 
   3  & 9 & 68, 1, 2, 1, 8, 38, 3, 2, 1 \\ 
   4   & 7 &  68, 1, 2, 1, 46, 5, 1 \\ 
   5 & 3 & 69, 3, 52\\
   6, 7 & 2 & 69, 55\\
   $\geq 8$ & 1 & 124 \\
 \hline
\end{tabular}
\end{center}
The only thresholds that produce two clusters as in Hasse clustering are 6 and 7.  In these cases, the  common matrices are:
\begin{equation}\label{eq:robust-v3-hier-epis3-1}
    \begin{pmatrix}
        0 & 0 & 0 & 0 & 0 \\
        0 & 0 & 0 & 1 & 0 \\
         0 & 0 & 0 & 1 & 0 \\
          0 & 0 & 0 & 0 & 0 \\
          0 & 0 & 0 & 0 & 0 
    \end{pmatrix}
\end{equation}
for the cluster with 69 matrices, and
\begin{equation}\label{eq:robust-v3-hier-epis3-2}
    \begin{pmatrix}
        0 & 0 & 0 & 0 & 0 \\
        0 & 0 & 0 & 0 & 0 \\
         0 & 0 & 0 & 0 & 0 \\
          0 & 0 & 0 & 0 & 0 \\
          0 & 0 & 0 & 1 & 0 
    \end{pmatrix}
\end{equation}
for the cluster with 55 matrices.  These are  the path matrices of 
\begin{equation}\label{eq:robust-gamev3-graph3}
\xymatrix{
e_2 \ar[r]  & e_6 \\
e_5 \ar[ur] &
}
\end{equation}
and
\begin{equation}\label{eq:robust-gamev3-graph4}
\xymatrix{
 e_{11} \ar[r] & e_6
},
\end{equation}
respectively.  Clearly, neither of them provides a sufficient winning strategy.

\section{Conclusion}\label{sec:article-conclusion}

\paragraph[What we did] In this article, we proved mathematical results on the enumeration of wiring diagrams as well as their categorical properties.  Using these theoretical results, we designed algorithms for extracting information - in the form of wiring diagrams - from sequential data.  We then tested our algorithms on the data of observed behavior of an autonomous agent playing a computer game.  We also compared the output of our algorithm with that of two other approaches based on standard clustering algorithms such as DBSCAN and agglomerative hierarchical clustering.  We then performed the comparison again using corrupted data.

\paragraph[What we found] We proved that quasi-skeleton wiring diagrams are in 1-1 correspondence with Hasse diagrams, the enumeration of which is already well-known.  We also proved a universal property for wiring diagrams: in order to compare a wiring diagram to sequential data, we must consider `flattenings' of the wiring diagram.  

The main algorithm we designed, Hasse clustering (Algorithm \ref{algo:5-v1}), was successful in extracting abstract concepts from time series: when applied to sequential data representing behaviors of an autonomous agent playing a computer game, our algorithm correctly identified the winning strategies.  

We found Hasse clustering to be more effective than the two approaches based on DBSCAN and agglomerative hierarchical clustering. When applied to corrupted data, Hasse clustering was more resilient than both DBSCAN and hierarchical clustering.  

\paragraph[Future directions] At present, Hasse clustering can only run on a personal computer when the number of relevant events $m$ is at most 5.  If one is to run Hasse clustering without access to  higher compute power, it will be necessary to modify  Algorithm \ref{algo:5-v1} so that it can also be run when $m$ is greater than 5.

Since Hasse clustering can be applied to any time series, one could  also apply it to data in various other contexts such as finance, medicine, education, communication, etc.,  to see if new insights can be uncovered that are otherwise unavailable using  existing algorithms.

\paragraph[Data and code availability] All code and data used in this study are publicly available in our GitHub repository  \cite{nima-git01}. The repository is organized to mirror the pipeline described in this paper and includes the full set of clustering scripts used in our analysis. To ensure reproducibility, we include both the raw data (compressed when necessary) and the processed outputs. The GitHub repository provides a complete and transparent implementation of our framework: from custom environment design, reinforcement learning training, and pre-processing of symbolic event sequences to dependency matrix construction and clustering. This makes it possible for researchers to both reproduce the results presented here and extend the approach to new environments or alternative analysis techniques. 

To complement the repository, we provide two citable technical reports on Zenodo that detail version-specific workflows: \cite{nima-report-v2} documents the single-solution environment (version one of the game in this paper - see Section \ref{sec:application-1}), including the pipeline, dependency extraction, Hasse-based clustering, and robustness studies, while \cite{nima-report-v3} covers the two-solution environment with the same components (version two of the game - see Section \ref{sec:application-2}).




\section{Acknowledgments}  This material is based upon work supported by the Air Force Office of Scientific Research under award 
number FA9550-24-1-0268,  and by a DARPA Young Faculty Award  D21AP10109-02. We also thank  Csaba Toth and Lance Cruz for helpful discussions.

\end{multicols}

\bibliography{refsMR2}{}
\bibliographystyle{plain}

\end{document}